\def\citep{\cite}
\newtheorem{theorem}{Theorem}
\newtheorem{lemma}[theorem]{Lemma}
\newtheorem{corollary}[theorem]{Corollary}
\theoremstyle{nonumberplain}\theoremsymbol{\ensuremath{\Box}}
\newtheorem{proof}{Proof.}
\theoremstyle{empty}
\newcommand{\beq}{\begin{eqnarray}}
\newcommand{\eeq}{\end{eqnarray}}
\newcommand{\beqn}{\begin{equation}}
\newcommand{\eeqn}{\end{equation}}
\newcommand{\E}{\mathbb{E}}
\tikzstyle{decision} = [diamond,
\tikzstyle{block} = [rectangle,
\tikzstyle{line} = [draw,
\tikzstyle{cloud} = [draw,
\begin{document} 

\title{Simplified Stochastic Feedforward Neural Networks}
\author{Kimin Lee, $\quad$ Jaehyung Kim, $\quad$ Song Chong, $\quad$ Jinwoo Shin
\thanks{K.\ Lee, J.\ Kim, S.\ Chong and J.\ Shin are with School of Electrical Engineering at Korea Advanced Institute of Science Technology, Republic of Korea.
Authors' e-mails: \texttt{kiminlee@kaist.ac.kr, jaehyungkim@kaist.ac.kr songchong@kaist.edu, jinwoos@kaist.ac.kr}}
}


\maketitle

\begin{abstract}
It has been believed that 
stochastic feedforward neural networks (SFNNs) 
have
several advantages beyond deterministic deep neural networks (DNNs):
they have more expressive power
allowing multi-modal mappings and regularize 
better due to their stochastic nature. However, 
training large-scale SFNN is notoriously harder.
In this paper, we aim at developing efficient training methods for SFNN, in particular using
known architectures and pre-trained parameters of DNN. To this end, we propose
a new intermediate stochastic model, called Simplified-SFNN, which can be
built upon any baseline DNN and 
approximates certain SFNN by simplifying 
its upper latent units above stochastic ones.
The main novelty of our approach is in 
establishing the connection between three models, i.e., DNN $\rightarrow$ Simplified-SFNN $\rightarrow$ SFNN, which naturally leads to an efficient training procedure of the stochastic models utilizing
pre-trained parameters of DNN.
Using several popular DNNs, 
we show how they can be effectively transferred to the corresponding stochastic models
for both multi-modal and classification tasks 
on MNIST, TFD, CASIA, CIFAR-10, CIFAR-100 and SVHN datasets. 
In particular, we train a stochastic model 
of 28 layers and 36 million parameters, where 
training such a large-scale stochastic network is significantly challenging without using Simplified-SFNN.
\end{abstract}

\section{Introduction} \label{sec:intro}

Recently, deterministic
deep neural networks (DNNs) 
have demonstrated state-of-the-art performance on many supervised tasks, 
e.g., speech recognition \citep{12speech} and object recognition \citep{12imagenet}.
One of the main components underlying these successes is the efficient training
methods 
for large-scale DNNs, which include
backpropagation \citep{85BackProp},
stochastic gradient descent \citep{51SCGD}, dropout/dropconnect \citep{12dropout,13Dropc}, 
batch/weight normalization \citep{15batch,16WN}, 
and various activation functions \citep{10relu, 16NA}.
On the other hand, stochastic feedforward neural networks (SFNNs) \citep{90RMN} having random latent units
are often necessary in to model the complex stochastic natures of many real-world tasks, e.g.,
structured prediction \citep{13SFNN} and image generation \citep{14GAN}.
Furthermore, it is believed that SFNN has several advantages beyond DNN \citep{15SFNN}:
it has more expressive power for multi-modal learning 
and regularizes better for large-scale networks. 

Training large-scale SFNN is notoriously hard since backpropagation is not directly applicable.
Certain stochastic neural networks using continuous random units are known to 
be trainable efficiently using backpropagation 
with variational techniques and reparameterization tricks \citep{13repar,ruiz2016generalized}.
On the other hand, training SFNN having discrete, i.e., binary or multi-modal, random units is
more difficult since intractable 
probabilistic inference is involved requiring too many random samples.
There have been several efforts toward developing efficient training methods for SFNN having binary random
latent units \citep{90RMN,96Saul,13SFNN,13Bengio, 15SFNN, 15muprop} (see Section \ref{sec:preSFNN} for more details).
However, training a SFNN is still significantly slower than training a DNN of the same architecture,
consequently most prior works have considered a small number (at most 5 or so) of layers in SFNN.
We aim for the same goal, but
our direction is complementary to them. 

Instead of training a SFNN directly, we study whether
pre-trained parameters from a DNN (or easier models) can be transferred to it, possibly with further low-cost fine-tuning.
This approach can be attractive since one can utilize recent advances in DNN design and training.
For example, one can design the network structure of SFNN following known specialized ones of DNN 
and use their pre-trained parameters.
To this end, we first try transferring pre-trained parameters of DNN using
sigmoid activation functions to those of the corresponding SFNN directly. 
In our experiments, the heuristic 
reasonably works well. 
For multi-modal learning, SFNN under such a simple transformation
outperforms DNN. Even for the MNIST classification, the former
performs similarly as the latter (see Section \ref{sec:naive_model} for more details).
However, it is questionable whether a similar strategy works in general, particularly for other unbounded
activation functions like ReLU \citep{10relu} since SFNN has binary, i.e., bounded, random latent units.
Moreover, it loses the regularization benefit of SFNN:
it is believed that transferring parameters of stochastic models to DNN helps its regularization, but the opposite is unlikely.

{\bf Contribution.}  
To address these issues,
we propose a special form of stochastic neural networks, named 
Simplified-SFNN, which is intermediate between SFNN and DNN,
having the following properties.
First, Simplified-SFNN can be
built upon any baseline DNN, possibly having unbounded activation functions. 
The most significant part of our approach lies in 
providing rigorous 
{\em network knowledge transferring} \citep{15net2net} between Simplified-SFNN and DNN.
In particular, we prove that parameters of DNN
can be transformed to those of the corresponding Simplified-SFNN while preserving the performance, i.e.,
both represent the same mapping. 
Second, Simplified-SFNN
approximates certain SFNN, better than DNN, by simplifying 
its upper latent units above stochastic ones using
two different non-linear activation functions.
Simplified-SFNN is much easier to train than SFNN while still maintaining its stochastic
regularization effect.
We also remark that SFNN is a Bayesian network, while
Simplified-SFNN is not. 

The above connection DNN $\rightarrow$ Simplified-SFNN $\rightarrow$ SFNN
naturally suggests the following training procedure for both SFNN and Simplified-SFNN:
train a baseline DNN first and 
then fine-tune its corresponding Simplified-SFNN initialized by the transformed DNN parameters.
The pre-training stage accelerates the training task since DNN is faster to train than Simplified-SFNN.
In addition, one can also utilize known DNN training techniques such as dropout and batch normalization
for fine-tuning Simplified-SFNN.
In our experiments, 
we train SFNN and Simplified-SFNN under the proposed strategy.
They consistently outperform the corresponding DNN for both multi-modal
and classification tasks, 
where the former and the latter are for measuring the model
expressive power and the regularization effect, respectively.
To the best of our knowledge, 
we are the first to confirm that
SFNN indeed regularizes better than DNN.
We also construct the stochastic models following the same network structure of popular
DNNs including Lenet-5 \citep{98MNIST}, NIN \citep{14NIN}, FCN \citep{long2015fully} and WRN \citep{16wideresnet}.
In particular, WRN (wide residual network) 
of 28 layers and 36 million parameters has shown the state-of-art performances
on CIFAR-10 and CIFAR-100 classification datasets, and 
our stochastic models built upon WRN outperform the deterministic WRN 
on the datasets. 

\begin{figure*} [t] \centering
\subfigure[]
{\epsfig{file=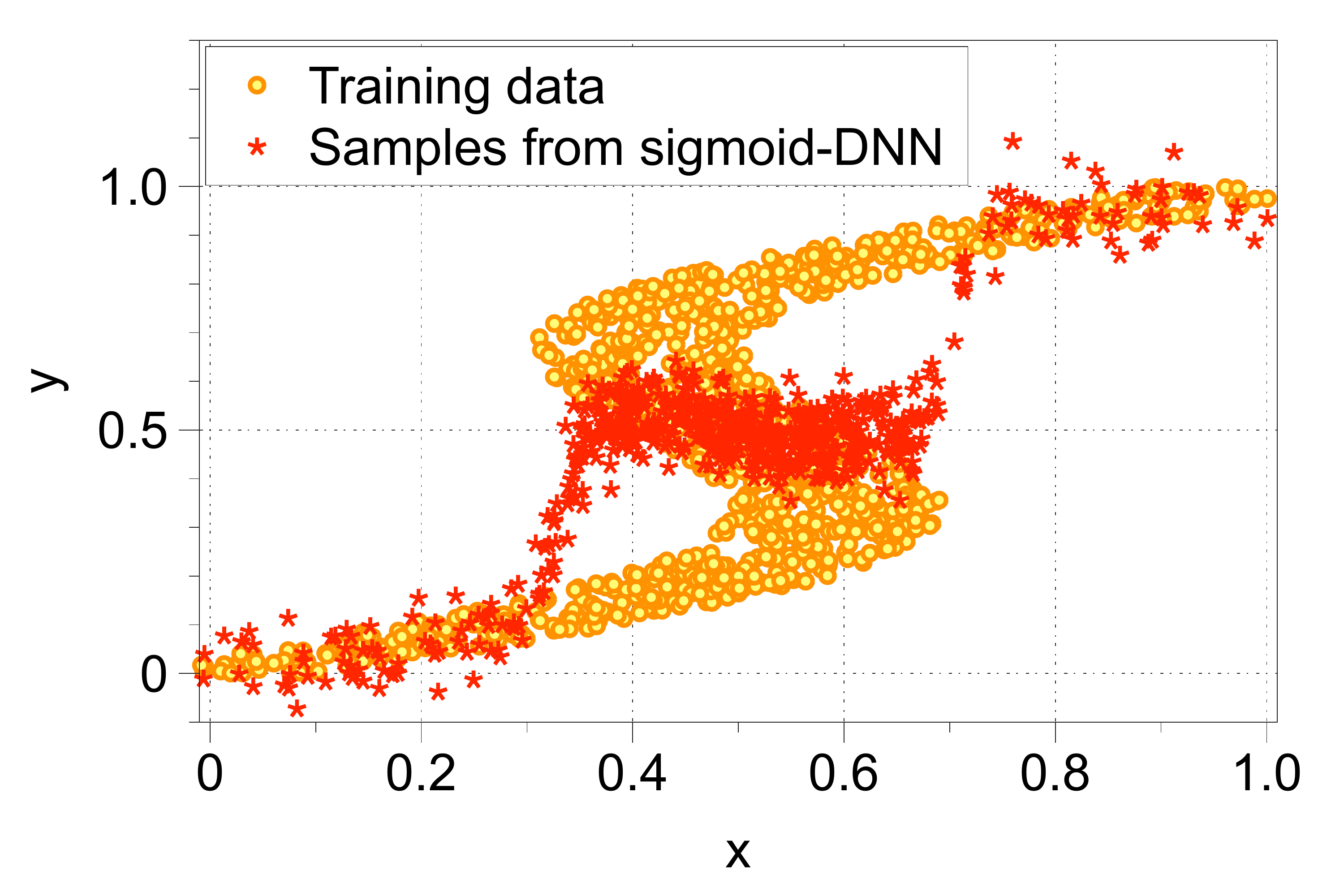,
width=0.45\textwidth}\label{fig:syn_dnn}}
\,
\subfigure[]
{\epsfig{file=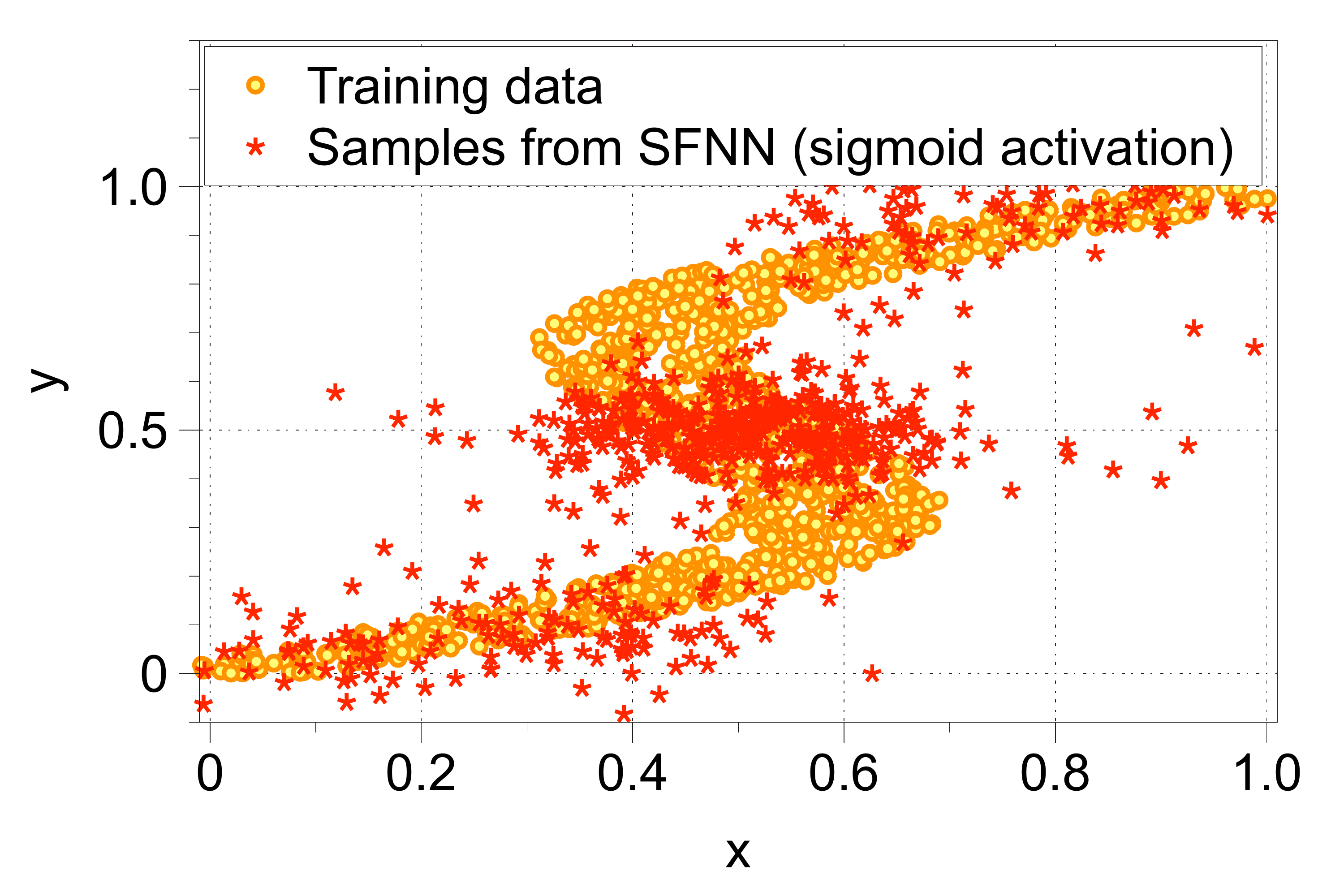,
width=0.45\textwidth}\label{fig:syn_sfnn}}
\caption{The generated samples from (a) sigmoid-DNN and (b) SFNN which uses same parameters trained by sigmoid-DNN.
One can note that SFNN can model the multiple modes in output space $y$ around $x=0.4$.}
\label{fig:syn}
\end{figure*}

\begin{table*}[t] 
\centering
\caption{The performance of simple parameter transformations from DNN to SFNN on the MNIST and synthetic datasets,
where each layer of neural networks 
contains 800 and 50 hidden units for two datasets, respectively. 
For all experiments, only the first hidden layer of the DNN is replaced by stochastic one.
We report negative log-likelihood (NLL) and classification error rates.
}
\label{Tab:naive_sig}
\resizebox{\textwidth}{!}{
\begin{tabular}{@{}lclllllll@{}}
\toprule
\multirow{2}{*}{\begin{tabular}{l}
Inference Model \end{tabular}}
&\multirow{2}{*}{\begin{tabular}{l}
Network Structure \end{tabular}}
&  \multicolumn{3}{c}{MNIST Classification} 
& \multicolumn{1}{c}{Multi-modal Learning} \\
&& \multicolumn{1}{c}{Training NLL} & \multicolumn{1}{c}{Training Error Rate ($\%$)} & \multicolumn{1}{c}{Test Error Rate ($\%$)}& \multicolumn{1}{c}{Test NLL}\\ \midrule
\multicolumn{1}{c}{sigmoid-DNN} & 2 hidden layers &  \multicolumn{1}{c}{0} &  \multicolumn{1}{c}{0}&  \multicolumn{1}{c}{1.54}&  \multicolumn{1}{c}{5.290} \\
\multicolumn{1}{c}{SFNN} & 2 hidden layers &  \multicolumn{1}{c}{0} &  \multicolumn{1}{c}{0}&  \multicolumn{1}{c}{1.56}&  \multicolumn{1}{c}{1.564} \\ \midrule
\multicolumn{1}{c}{sigmoid-DNN} & 3 hidden layers &  \multicolumn{1}{c}{0.002} &  \multicolumn{1}{c}{0.03}&  \multicolumn{1}{c}{1.84}&  \multicolumn{1}{c}{4.880} \\
\multicolumn{1}{c}{SFNN} & 3 hidden layers &  \multicolumn{1}{c}{0.022} &  \multicolumn{1}{c}{0.04}&  \multicolumn{1}{c}{1.81}&  \multicolumn{1}{c}{0.575} \\
\midrule
\multicolumn{1}{c}{sigmoid-DNN} & 4 hidden layers &  \multicolumn{1}{c}{0} &  \multicolumn{1}{c}{0.01}&  \multicolumn{1}{c}{1.74}&  \multicolumn{1}{c}{4.850} \\
\multicolumn{1}{c}{SFNN} & 4 hidden layers &  \multicolumn{1}{c}{0.003} &  \multicolumn{1}{c}{0.03}&  \multicolumn{1}{c}{1.73}&  \multicolumn{1}{c}{0.392} \\ \midrule
\multicolumn{1}{c}{ReLU-DNN} & 2 hidden layers  &  \multicolumn{1}{c}{0.005} &  \multicolumn{1}{c}{0.04}&  \multicolumn{1}{c}{1.49} &  \multicolumn{1}{c}{7.492} \\
\multicolumn{1}{c}{SFNN} & 2 hidden layers &   \multicolumn{1}{c}{0.819} &  \multicolumn{1}{c}{4.50}&  \multicolumn{1}{c}{5.73} &  \multicolumn{1}{c}{2.678} \\ \midrule
\multicolumn{1}{c}{ReLU-DNN} & 3 hidden layers &  \multicolumn{1}{c}{0} &  \multicolumn{1}{c}{0}&  \multicolumn{1}{c}{1.43}&  \multicolumn{1}{c}{7.526} \\
\multicolumn{1}{c}{SFNN} & 3 hidden layers &  \multicolumn{1}{c}{1.174} &  \multicolumn{1}{c}{16.14}&  \multicolumn{1}{c}{17.83}&  \multicolumn{1}{c}{4.468} \\
\midrule
\multicolumn{1}{c}{ReLU-DNN} & 4 hidden layers &  \multicolumn{1}{c}{0} &  \multicolumn{1}{c}{0}&  \multicolumn{1}{c}{1.49}&  \multicolumn{1}{c}{7.572} \\
\multicolumn{1}{c}{SFNN} & 4 hidden layers &  \multicolumn{1}{c}{1.213} &  \multicolumn{1}{c}{13.13}&  \multicolumn{1}{c}{14.64}&  \multicolumn{1}{c}{1.470} \\
\bottomrule
\end{tabular}}
\end{table*}
\section{Simple Transformation from DNN to SFNN} \label{sec:naive_model}

\subsection{Preliminaries for SFNN}\label{sec:preSFNN}
Stochastic feedforward neural network (SFNN) is a hybrid model, which has both stochastic binary and deterministic hidden units.
We first introduce SFNN with one stochastic hidden layer (and without deterministic
hidden layers) for simplicity.
Throughout this paper,
we commonly denote the bias for unit $i$ and 
the weight matrix of the $\ell$-th hidden layer by $b_i^\ell$
and $\mathbf{W}^{\ell}$, respectively.
Then, 
the stochastic hidden layer in SFNN is defined as a binary random vector with $N^1$ units, i.e.,
$\mathbf{h}^1 \in \{0,1\}^{N^1}$, drawn under the following distribution:
\begin{align}
\label{def:SFNN_stochastic}
P\left(\mathbf{h}^1\mid \mathbf{x}\right) = \prod \limits_{i=1}^{N^1} P\left(h_i^1\mid\mathbf{x}\right), 
\text{where} \quad
P\left(h_i^{1}=1\mid \mathbf{x}\right) = \sigma \left( \mathbf{W}^{1}_{i} \mathbf{x}+b_i^{1}\right). 
\end{align}
In the above, $\mathbf x$ is the input vector and $\sigma \left( x \right) =1/\left(1+e^{-x}\right)$ is the sigmoid function.
Our conditional distribution of the output $y$ is defined as follows: 
\begin{align*}
\label{def:SFNN_output}
P\left(y \mid \mathbf{x}\right) 
= \E_{P\left(\mathbf{h}^{1} \mid \mathbf{x}\right)} \left[ P\left( y \mid \mathbf{h}^1 \right) \right] 
= \E_{P\left(\mathbf{h}^{1} \mid \mathbf{x}\right)} \left[\mathcal{N} \left(y \mid \mathbf{W}^2 \mathbf{h}^1 + b^2 ,~{\sigma_y^2} \right) \right],
\end{align*}
where $\mathcal N(\cdot)$ denotes the normal distribution 
with mean $\mathbf{W}^2 \mathbf{h}^1 + b^2$ and (fixed) variance $\sigma_y^2$. 
Therefore, 
$P\left(y \mid \mathbf{x}\right)$ can express a very complex, multi-modal distribution 
since it is a mixture of exponentially many normal distributions.
The multi-layer extension 
is straightforward via a combination of stochastic and deterministic hidden layers 
(see \citep{13SFNN,15SFNN}). 
Furthermore, one can use any other output distributions as like DNN, e.g.,
softmax for classification tasks.

There are two computational issues for training SFNN:
computing expectations with respect to stochastic units in the forward pass and computing gradients in the backward pass. 
One can notice that both are computationally
intractable since they require summations over exponentially many configurations of all stochastic units.
First, in order to handle the issue
in the forward pass, one can use the following Monte Carlo approximation for estimating the expectation:
$P\left(y \mid \mathbf{x}\right) 
\backsimeq \frac{1}{M} \sum \limits_{m=1}^M P(y \mid \mathbf{h}^{(m)}),$ 
where $\mathbf{h}^{(m)} 
\sim P\left(\mathbf{h}^{1}\mid \mathbf{x}\right)$
and $M$ is the number of samples.
This random estimator is unbiased and has relatively low variance \citep{13SFNN} 
since 
one can draw samples from the exact distribution.
Next, in order to handle the issue in backward
pass,
 \citep{90RMN} proposed Gibbs sampling, but it is known that it often
mixes poorly.
 \citep{96Saul} proposed a variational learning based on the mean-field approximation, 
but it has additional parameters making the variational lower bound looser.
More recently, several other techniques have been proposed including
unbiased estimators of the variational bound using importance sampling \citep{13SFNN,15SFNN}
and biased/unbiased estimators of the gradient for approximating backpropagation \citep{13Bengio, 15SFNN, 15muprop}.

\subsection{Simple transformation from sigmoid-DNN and ReLU-DNN to SFNN}

Despite the recent advances, 
training SFNN is still very slow compared to DNN due to the sampling procedures:
in particular, 
it is notoriously hard to train SFNN when the network structure is deeper and wider.
In order to handle these issues, we consider the following approximation:
\begin{align}
\label{naive_approx}
P\left(y \mid \mathbf{x}\right) &=
\E_{P\left(\mathbf{h}^{1} \mid \mathbf{x}\right)} \left[\mathcal{N} \left(y \mid \mathbf{W}^2 \mathbf{h}^1 + b^2, ~\sigma_y^2 \right) \right]\notag\\
&\backsimeq 
 \mathcal{N} \left(y \mid \E_{P\left(\mathbf{h}^{1} \mid \mathbf{x}\right)} \left[\mathbf{W}^2 \mathbf{h}^1\right] + b^2,~\sigma_y^2 \right) \nonumber \\
 &= \mathcal{N} \left(y \mid \mathbf{W}^2 \sigma \left( \mathbf{W}^{1} \mathbf{x}+\mathbf{b}^{1}\right) + b^2,~\sigma_y^2 \right).
\end{align}
Note that the above approximation corresponds to replacing stochastic units by deterministic ones such that their hidden activation values are same as marginal distributions of stochastic units, i.e.,
SFNN can be approximated by DNN using sigmoid activation functions, say sigmoid-DNN.
When there exist more latent layers above the stochastic one,
one has to apply similar approximations to all of them, i.e., exchanging the orders of expectations and non-linear functions, for making the DNN and SFNN are equivalent.
Therefore, instead of training SFNN directly, 
one can try transferring pre-trained
parameters of sigmoid-DNN to those of the corresponding SFNN directly:
train sigmoid-DNN instead of SFNN, and 
replace deterministic units by stochastic ones for the inference purpose.
Although such a strategy looks somewhat `rude', it was often observed in the literature
that it reasonably works well for SFNN \citep{15SFNN} and we also evaluate it as reported in Table \ref{Tab:naive_sig}. 
We also note that similar approximations appear in the context of dropout:
it trains a stochastic model averaging exponentially many DNNs sharing parameters,
but also approximates a single DNN well.

Now we investigate a similar transformation in the case
when the DNN uses the unbounded ReLU activation function, say ReLU-DNN.
Many recent deep networks are of the ReLU-DNN type because they mitigate the gradient vanishing problem, and their pre-trained parameters are often available.
Although it is straightforward to build SFNN from sigmoid-DNN, 
it is less clear in this case since ReLU is unbounded.
To handle this issue, we 
redefine the stochastic latent units of SFNN: 
\begin{align}
\label{def:relu}
P\left(\mathbf{h}^1 \mid \mathbf{x}\right) = \prod \limits_{i=1}^{N^1} P\left(h_i^1\mid \mathbf{x}\right), 
\text{where}\quad
P\left(h_i^{1}=1\mid \mathbf{x}\right) = \min\bigg\{\alpha f\left(\mathbf{W}^{1}_{i} \mathbf{x}+b_i^{1}\right) ,1\bigg\}. 
\end{align}
In the above,
$f(x) = \max\{x,0\}$ is the ReLU activation function
and $\alpha$ is some hyper-parameter.
A simple transformation can be defined similarly as the case of sigmoid-DNN via replacing deterministic units by stochastic ones.
However, to preserve the parameter information of ReLU-DNN,
one has to choose $\alpha$ such that 
$\alpha f\left(\mathbf{W}^{1}_{i} \mathbf{x}+b_i^{1}\right)\leq 1$
and rescale upper parameters $\mathbf{W}^{2}$ as follows:
\begin{align} \label{eq:relutransform}
\alpha^{-1} \leftarrow \underset{i,\mathbf{x}}{{\max}} ~\left|f\left(\mathbf{\widehat W}^{1}_{i} \mathbf{x} +{\widehat b}_i^{1}\right)\right|,~ 
 \left( \mathbf{W}^{1},~\mathbf{b}^{1} \right)
 \leftarrow  \left( 
\mathbf{\widehat{W}}^{1},~
\mathbf{\widehat{b}}^{1}   \right), ~
 \left(  \mathbf{W}^{2},~\mathbf{b}^{2} \right)
\leftarrow  
\left(  ~
\mathbf{\widehat{W}}^{2}/\alpha ,~
 \mathbf{\widehat{b}}^{2}  \right). 
\end{align}
Then, applying similar approximations as in \eqref{naive_approx}, i.e.,
exchanging the orders of expectations and non-linear functions,
one can observe that ReLU-DNN and SFNN are equivalent.

We evaluate the performance of the simple transformations 
from DNN to SFNN on the MNIST dataset \citep{98MNIST} and the synthetic dataset \citep{94MDN},
where the former and the latter are popular datasets used for
a classification task and a multi-modal (i.e.,
one-to-many mappings) prediction learning, respectively.
In all experiments reported in this paper, we
commonly use
the softmax and Gaussian with standard deviation of $\sigma_y=0.05$
are used 
for the output probability on classification and regression
tasks, respectively.
The only first hidden layer of DNN is replaced by a stochastic layer, 
and we use 500 samples for estimating the expectations in the SFNN inference.
As reported in Table \ref{Tab:naive_sig},
we observe that the simple transformation often works well for both tasks:
the SFNN and sigmoid-DNN inferences (using same parameters trained by sigmoid-DNN) 
perform similarly for the classification task
and the former significantly outperforms the latter for the multi-modal task (also see Figure \ref{fig:syn}).
It might suggest some possibilities 
that the expensive SFNN training might not be not necessary, depending on
the targeted learning quality.
However,
in case of ReLU, SFNN performs much worse than ReLU-DNN for the MNIST classification task
under the parameter transformation.

\begin{figure*} [t] \centering
\subfigure[]
{\epsfig{file=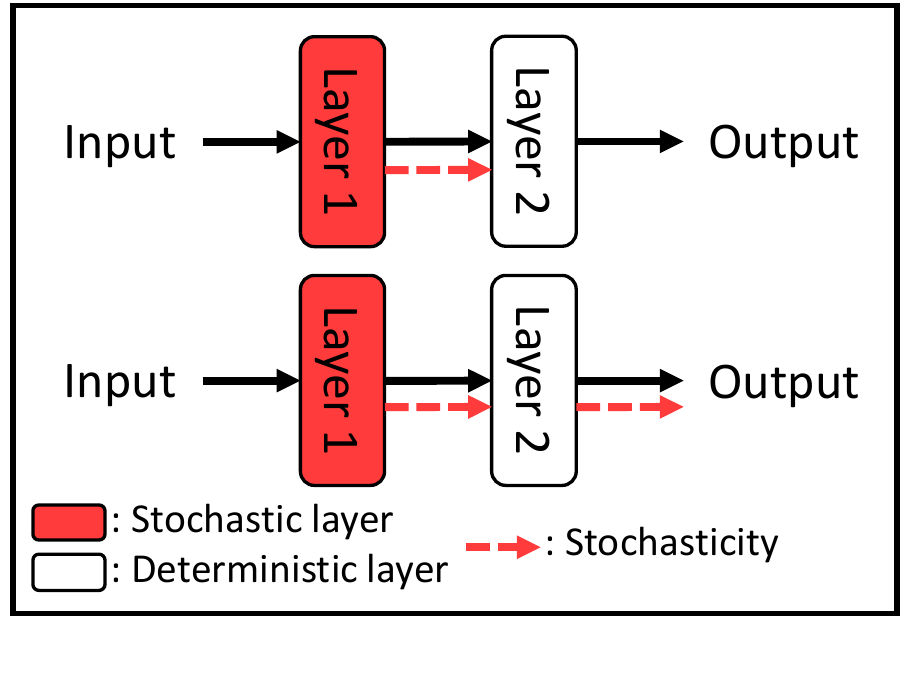, width=0.32\textwidth}\label{fig:fig1b}}
\,
\subfigure[]
{\epsfig{file=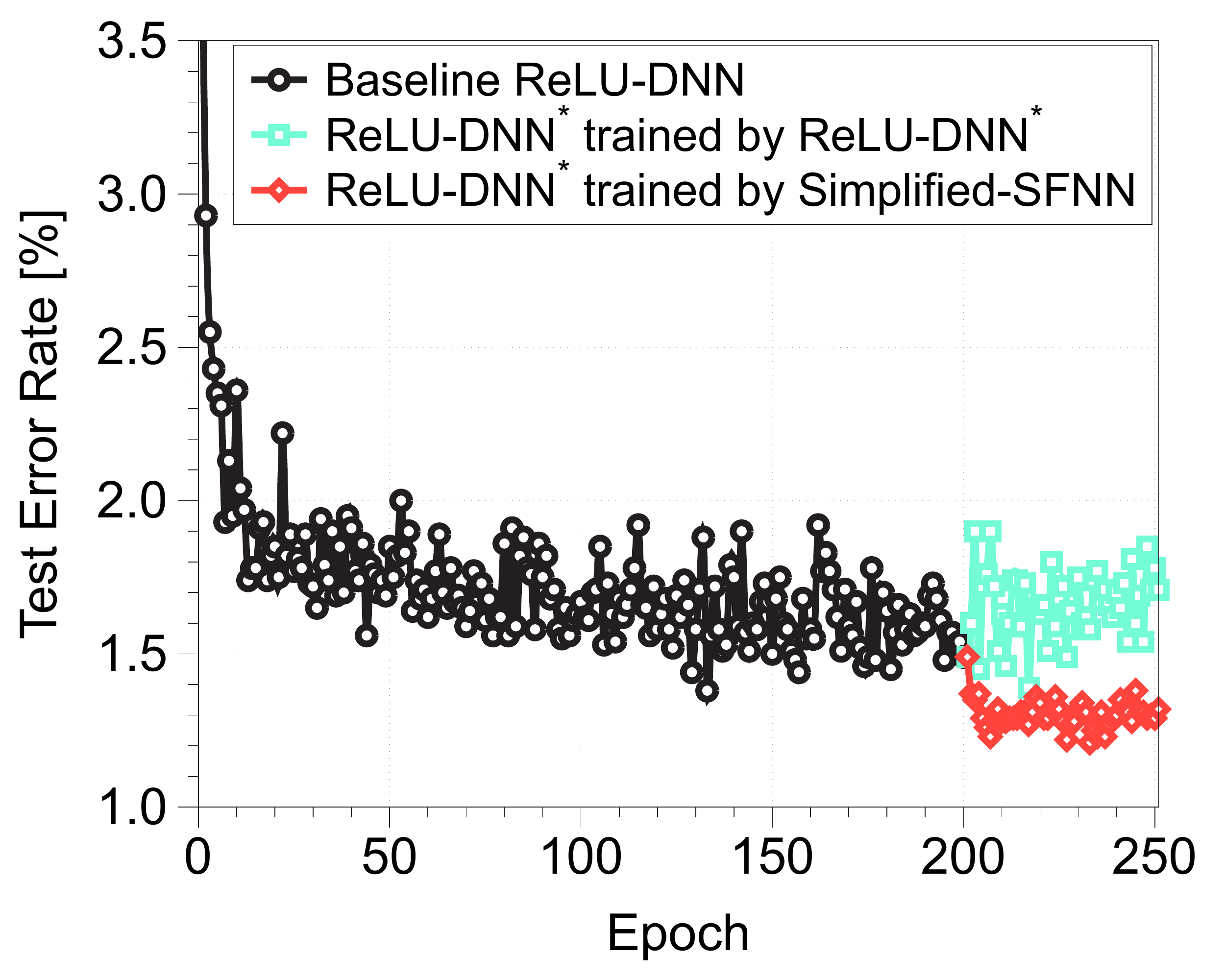, width=0.32\textwidth}\label{fig:fig1a}}
\,
\subfigure[]
{\epsfig{file=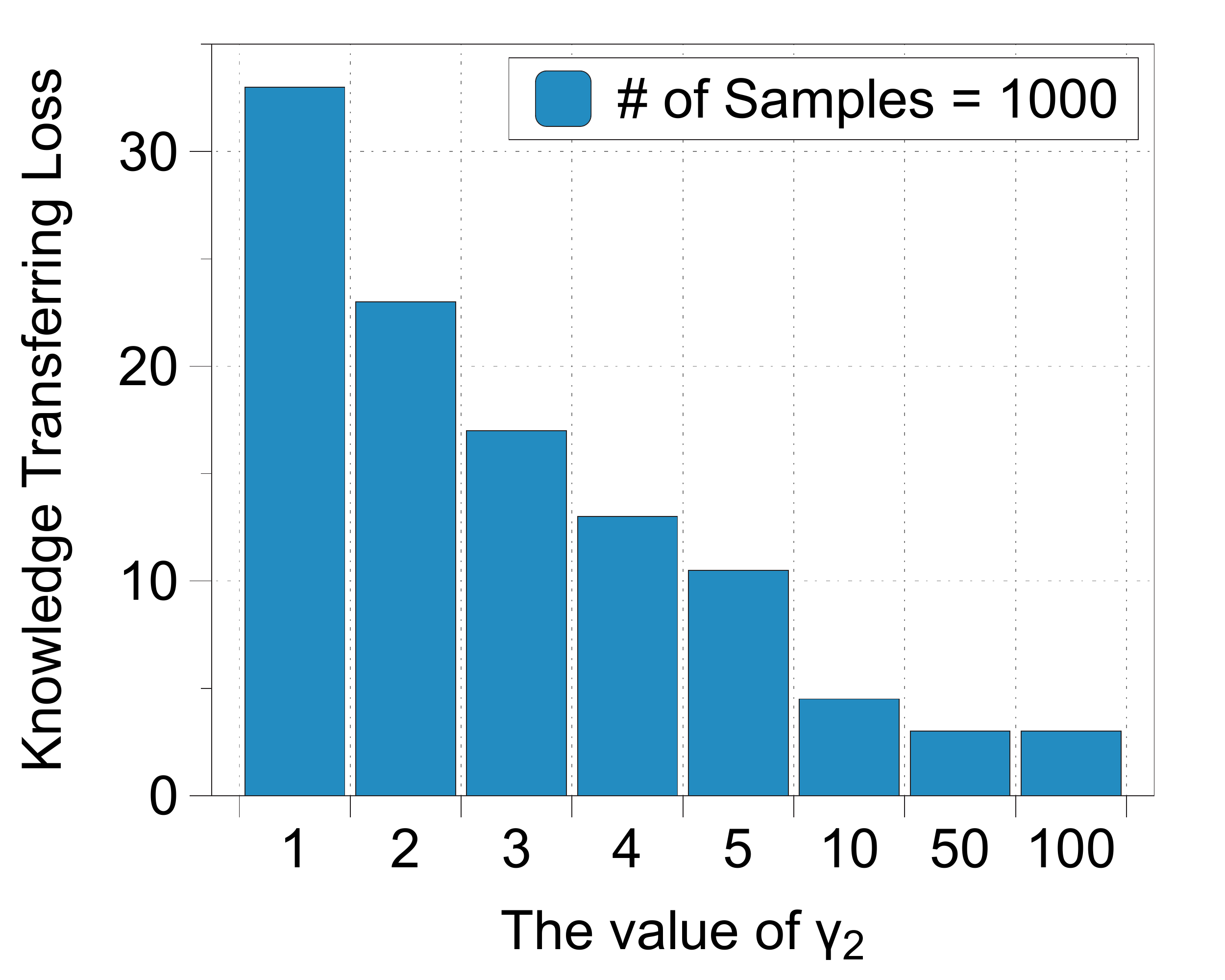, width=0.32\textwidth}\label{fig:fig1c}}
\caption{ (a) Simplified-SFNN (top) and SFNN (bottom). 
The randomness of the stochastic layer propagates only  to its upper layer in the case of Simplified-SFNN.
(b) For first 200 epochs, we train a baseline ReLU-DNN.
  Then, we train simplified-SFNN initialized by the DNN parameters under transformation
  \eqref{eq:weighttrans} with $\gamma_2=50$. 
We observe that training ReLU-DNN$^*$ directly does not 
reduce the test error
even when network knowledge transferring still holds between the baseline ReLU-DNN and the corresponding ReLU-DNN$^*$.
(c) As the value of $\gamma_2$ increases, knowledge transferring loss measured as
$\frac{1}{|D|} \frac{1}{N^{\ell}} \sum \limits_{\mathbf{x}} \sum \limits_i \left| h_i^{\ell}\left(\mathbf{x}\right) 
-\widehat{h}_i^{\ell}\left(\mathbf{x}\right) \right|$ is decreasing.
  }
    \label{fig_comp}
\end{figure*}

\begin{table*}[t] 
\centering
\caption{Classification test error rates [$\%$] on MNIST, where
each layer of neural networks contains 800 hidden units. 
All Simplified-SFNNs are constructed by replacing the first hidden layer of a baseline DNN with stochastic hidden layer. We also consider training DNN and fine-tuning Simplified-SFNN using batch normalization (BN)
and dropout (DO). The performance improvements beyond baseline DNN (due to fine-tuning DNN parameters under  Simplified-SFNN) are calculated in the bracket.}
\label{Tab:relu_SSFNN}
\resizebox{\textwidth}{!}{
\begin{tabular}{@{}clccllllll@{}}
\toprule
\multirow{1}{*}{\begin{tabular}{l}
Inference Model \end{tabular}}
&\multirow{1}{*}{\begin{tabular}{l}
Training Model \end{tabular}}
&\multirow{1}{*}{\begin{tabular}{l}
Network Structure \end{tabular}}
&  \multicolumn{1}{c}{without BN \& DO} & \multicolumn{1}{c}{with BN} & \multicolumn{1}{c}{with DO} \\ \midrule
\multicolumn{1}{c}{sigmoid-DNN} & sigmoid-DNN & 2 hidden layers 
&  \multicolumn{1}{c}{1.54} &  \multicolumn{1}{c}{1.57}&  \multicolumn{1}{c}{1.25} \\
\multicolumn{1}{c}{SFNN} & sigmoid-DNN & 2 hidden layers 
&  \multicolumn{1}{c}{1.56} &  \multicolumn{1}{c}{2.23}&  \multicolumn{1}{c}{1.27} \\
\multicolumn{1}{c}{Simplified-SFNN} & fine-tuned by Simplified-SFNN & 2 hidden layers &  \multicolumn{1}{c}{1.51} &  \multicolumn{1}{c}{1.5}&  \multicolumn{1}{c}{1.11} \\
\multicolumn{1}{c}{sigmoid-DNN$^*$} & fine-tuned by Simplified-SFNN & 2 hidden layers 
&  \multicolumn{1}{c}{1.48 (0.06)} &  \multicolumn{1}{c}{1.48 (0.09)}&  \multicolumn{1}{c}{1.14 (0.11)} \\ 
\multicolumn{1}{c}{SFNN} & fine-tuned by Simplified-SFNN & 2 hidden layers 
&  \multicolumn{1}{c}{1.51} &  \multicolumn{1}{c}{1.57}&  \multicolumn{1}{c}{1.11} \\\midrule
\multicolumn{1}{c}{ReLU-DNN} & ReLU-DNN & 2 hidden layers 
&  \multicolumn{1}{c}{1.49} &  \multicolumn{1}{c}{1.25}&  \multicolumn{1}{c}{1.12} \\
\multicolumn{1}{c}{SFNN} & ReLU-DNN & 2 hidden layers 
&  \multicolumn{1}{c}{5.73} &  \multicolumn{1}{c}{3.47}&  \multicolumn{1}{c}{1.74} \\
\multicolumn{1}{c}{Simplified-SFNN} & fine-tuned by Simplified-SFNN & 2 hidden layers &  \multicolumn{1}{c}{1.41} &  \multicolumn{1}{c}{1.17}&  \multicolumn{1}{c}{1.06} \\
\multicolumn{1}{c}{ReLU-DNN$^*$} & fine-tuned by Simplified-SFNN & 2 hidden layers 
&  \multicolumn{1}{c}{1.32 (0.17)} &  \multicolumn{1}{c}{1.16 (0.09)}&  \multicolumn{1}{c}{1.05 (0.07)} \\ 
\multicolumn{1}{c}{SFNN} & fine-tuned by Simplified-SFNN & 2 hidden layers 
&  \multicolumn{1}{c}{2.63} &  \multicolumn{1}{c}{1.34}&  \multicolumn{1}{c}{1.51} \\ \midrule
\multicolumn{1}{c}{ReLU-DNN} & ReLU-DNN & 3 hidden layers 
&  \multicolumn{1}{c}{1.43} &  \multicolumn{1}{c}{1.34}&  \multicolumn{1}{c}{1.24} \\
\multicolumn{1}{c}{SFNN} & ReLU-DNN & 3 hidden layers 
&  \multicolumn{1}{c}{17.83} &  \multicolumn{1}{c}{4.15}&  \multicolumn{1}{c}{1.49} \\
\multicolumn{1}{c}{Simplified-SFNN} & fine-tuned by Simplified-SFNN & 3 hidden layers 
&  \multicolumn{1}{c}{1.28} &  \multicolumn{1}{c}{1.25}&  \multicolumn{1}{c}{1.04} \\
\multicolumn{1}{c}{ReLU-DNN$^*$} & fine-tuned by Simplified-SFNN & 3 hidden layers 
&  \multicolumn{1}{c}{1.27 (0.16)} &  \multicolumn{1}{c}{1.24 (0.1)}&  \multicolumn{1}{c}{{\bf 1.03 (0.21)}} \\
\multicolumn{1}{c}{SFNN} & fine-tuned by Simplified-SFNN & 3 hidden layers 
&  \multicolumn{1}{c}{1.56} &  \multicolumn{1}{c}{1.82}&  \multicolumn{1}{c}{1.16} \\\midrule
\multicolumn{1}{c}{ReLU-DNN} & ReLU-DNN & 4 hidden layers 
&  \multicolumn{1}{c}{1.49} &  \multicolumn{1}{c}{1.34}&  \multicolumn{1}{c}{1.30} \\
\multicolumn{1}{c}{SFNN} & ReLU-DNN & 4 hidden layers 
&  \multicolumn{1}{c}{14.64} &  \multicolumn{1}{c}{3.85}&  \multicolumn{1}{c}{2.17} \\
\multicolumn{1}{c}{Simplified-SFNN} & fine-tuned by Simplified-SFNN & 4 hidden layers &  \multicolumn{1}{c}{1.32} &  \multicolumn{1}{c}{1.32}&  \multicolumn{1}{c}{1.25} \\
\multicolumn{1}{c}{ReLU-DNN$^*$} & fine-tuned by Simplified-SFNN & 4 hidden layers 
&  \multicolumn{1}{c}{1.29 (0.2)} &  \multicolumn{1}{c}{1.29 (0.05)}&  \multicolumn{1}{c}{1.25 (0.05)} \\
\multicolumn{1}{c}{SFNN} & fine-tuned by Simplified-SFNN & 4 hidden layers 
&  \multicolumn{1}{c}{3.44} &  \multicolumn{1}{c}{1.89}&  \multicolumn{1}{c}{1.56} \\
\bottomrule \end{tabular}}
\end{table*}

\section{Transformation from DNN to SFNN via Simplified-SFNN}
\label{sec:model}

In this section, we propose an advanced method to utilize the pre-trained parameters of DNN for training SFNN.
As shown in the previous section, simple parameter transformations from DNN to SFNN do not clearly work in general,
in particular for activation functions other than sigmoid. 
Moreover, training
DNN does not utilize the stochastic regularizing effect, which is an important benefit of SFNN. 
To address the issues, we design an intermediate model, called Simplified-SFNN.
The proposed model is a special form of stochastic neural network, 
which approximates certain SFNN 
by simplifying its upper latent units above stochastic ones.
Then, we establish more rigorous connections between three models: DNN $\rightarrow$ Simplified-SFNN $\rightarrow$ SFNN, which leads 
to an efficient training procedure of the stochastic models utilizing pre-trained parameters of DNN.
In our experiments, we evaluate the strategy for various tasks and popular DNN architectures.

\subsection{Simplified-SFNN of two hidden layers and non-negative activation functions}

For clarity of presentation, we first introduce Simplified-SFNN with two hidden layers and non-negative activation functions,
where its extensions to multiple layers and general activation functions
are presented in Section \ref{sec:mainresult}.
We also remark that we primarily describe fully-connected Simplified-SFNNs, but their convolutional versions can also be naturally defined.
In Simplified-SFNN of two hidden layers,
we assume that the first and second hidden layers consist of stochastic binary hidden units 
and deterministic ones, respectively.
As in \eqref{def:relu},
the first layer is defined as a binary random vector with $N^1$ units, i.e., 
$\mathbf{h}^1 \in \{0,1\}^{N^1}$, drawn under the following distribution: 
\begin{align}
\label{def:pro}
P\left(\mathbf{h}^1\mid \mathbf{x}\right) = \prod \limits_{i=1}^{N^1} P\left(h_i^1\mid \mathbf{x}\right), 
\text{where}\quad
P\left(h_i^{1}=1\mid \mathbf{x}\right) = \min\bigg\{\alpha_1 f\left(\mathbf{W}^{1}_{i} \mathbf{x}+b_i^{1}\right) ,1\bigg\}.
\end{align}
where $\mathbf x$ is the input vector,
$\alpha_1>0$ is a hyper-parameter for the first layer,
and $f:\mathbb{R}\rightarrow \mathbb{R}_{+}$ is some non-negative non-linear activation function
with $\left| f^\prime(x) \right| \leq 1$ for all $x\in \mathbb R$, 
e.g., ReLU and sigmoid activation functions.
Now the second layer
is defined as the following deterministic vector with $N^2$ units, i.e., $\mathbf{h}^{2}(\mathbf{x})\in \mathbb{R}^{N^{2}}$: 
\begin{align}
\label{def:deter}
 \mathbf{h}^2\left(\mathbf{x}\right)  
 =\left[f \left( \alpha_2 \left(   \E_{P \left(\mathbf{h}^{1}\mid \mathbf{x}\right)} \left[s\left(\mathbf{W}^{2}_{j}\mathbf{h}^{1}+b_j^{2}\right)\right]   - s\left(0\right)  \right) \right):\forall j  \right], 
\end{align}
where $\alpha_2>0$ is a hyper-parameter for the second layer and $s:\mathbb{R}\rightarrow \mathbb{R}$ is 
a differentiable function with $|s^{\prime\prime}(x)| \leq 1$ for all $x\in \mathbb R$, e.g., sigmoid and tanh 
functions.
In our experiments, we use the sigmoid function for $s(x)$.
Here, one can note that the proposed model also has the same computational issues with SFNN in forward and backward
passes due to the complex expectation.
One can train Simplified-SFNN similarly as SFNN: 
we use Monte Carlo approximation for estimating the expectation and 
the (biased) estimator of the gradient for approximating backpropagation inspired by  \cite{15SFNN} (see Section \ref{sec:train} for more details).

We are interested in transferring parameters of DNN to Simplified-SFNN 
to utilize the training benefits of DNN since the former is much faster to train than the latter.
To this end, we consider the following DNN of which
$\ell$-th hidden layer is deterministic and defined 
as follows:
\begin{align} \label{ea:ENN}
&\mathbf{\widehat{h}}^\ell \left(\mathbf{x}\right) = \left[~ \widehat{h}_i^\ell \left(\mathbf{x}\right)= f \left( \mathbf{\widehat W}^\ell_{i}\mathbf{\widehat{h}}^{\ell-1}\left(\mathbf{x}\right) + {\widehat  b}_i^\ell \right):\forall i~ \right], 
\end{align}
where $\mathbf{\widehat h}^0 (\mathbf{x})=\mathbf{x}$.
As stated in the following theorem, 
we establish a rigorous way
how to initialize parameters of Simplified-SFNN 
in order to transfer the knowledge stored in DNN. 
\begin{theorem} \label{thm:2samefeature}
Assume that both DNN and Simplified-SFNN with two hidden layers have same network structure with non-negative activation function $f$.
Given parameters $\{ \mathbf{\widehat W}^\ell,\mathbf{\widehat b}^\ell: {\ell = 1,2} \}$ of DNN and input dataset $D$, 
choose those of Simplified-SFNN as follows:
\begin{align} \label{eq:weighttrans}
\left( \alpha_1, \mathbf{W}^{1},~\mathbf{b}^{1} \right)
\leftarrow  \left( \frac{1}{\gamma_1},~
\mathbf{\widehat{W}}^{1},~
\mathbf{\widehat{b}}^{1}   \right), ~ 
\left( \alpha_2, \mathbf{W}^{2},~\mathbf{b}^{2} \right)
\leftarrow  
\left(  \frac{\gamma_2\gamma_1}{ s^{\prime} \left(0\right)}, ~
\frac{1}{\gamma_2} \mathbf{\widehat{W}}^{2},~
\frac{1}{\gamma_1\gamma_2 } \mathbf{\widehat{b}}^{2}  \right),
\end{align}
where $\gamma_1= \underset{i,\mathbf{x}\in D}{{\max}} ~\left|f\left(\mathbf{\widehat W}^{1}_{i} \mathbf{x} +{\widehat b}_i^{1}\right)\right|$ and $\gamma_2>0$ is any positive constant.
Then, for all $j,\mathbf{x}\in D$, it follows that
\begin{align*}
\left| h_j^2\left(\mathbf{x}\right) 
-\widehat{h}_j^2\left(\mathbf{x}\right) \right| 
 \leq  
\frac{\gamma_1\left(\sum_i \left| {\widehat W}_{ij}^2\right|+ {\widehat b}_{j}^2\gamma_1^{-1} \right)^2}{2s' \left(0\right)\gamma_2}.
\quad 
\end{align*}
\end{theorem}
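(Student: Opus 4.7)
The plan is to show that, under the transformation in \eqref{eq:weighttrans}, the Simplified-SFNN's second-layer output equals the DNN's second-layer output plus a Taylor remainder of order $1/\gamma_2$, and then to invoke the Lipschitz property of $f$ to pass the bound from pre-activations to activations.

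First I would exploit the choice $\alpha_1 = 1/\gamma_1$ together with the definition $\gamma_1 = \max_{i,\mathbf{x}\in D} f\bigl(\mathbf{\widehat{W}}^{1}_i \mathbf{x} + \widehat{b}^1_i\bigr)$. Because $f$ is non-negative, $\alpha_1 f(\mathbf{W}^{1}_i \mathbf{x} + b^1_i)$ already lies in $[0,1]$ for every $\mathbf{x}\in D$, so the clip in \eqref{def:pro} is inactive and
\[
\E_{P(\mathbf{h}^1\mid\mathbf{x})}[h^1_i] \;=\; P(h^1_i = 1 \mid \mathbf{x}) \;=\; \widehat{h}^1_i(\mathbf{x})/\gamma_1.
\]
This clean identity is the bridge between the stochastic first layer and the DNN's first-layer activations, and is what makes the rest of the computation go through exactly rather than approximately.

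Second, for the upper (deterministic) layer I would apply a second-order Taylor expansion of $s$ around $0$. For each realization of $\mathbf{h}^1$, Taylor's theorem yields some intermediate $\xi$ with
\[
s(\mathbf{W}^{2}_j \mathbf{h}^{1} + b^2_j) - s(0) \;=\; s'(0)\bigl(\mathbf{W}^{2}_j \mathbf{h}^{1} + b^2_j\bigr) + \tfrac{1}{2}\, s''(\xi)\bigl(\mathbf{W}^{2}_j \mathbf{h}^{1} + b^2_j\bigr)^2.
\]
Taking $\E_{P(\mathbf{h}^1\mid\mathbf{x})}$, multiplying by $\alpha_2 = \gamma_1\gamma_2/s'(0)$, and substituting $\mathbf{W}^{2} = \mathbf{\widehat{W}}^{2}/\gamma_2$, $b^2 = \widehat{b}^2/(\gamma_1\gamma_2)$ together with the first-layer identity, the linear term collapses \emph{exactly} to the DNN pre-activation $\mathbf{\widehat{W}}^{2}_j \mathbf{\widehat{h}}^{1}(\mathbf{x}) + \widehat{b}^2_j$. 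Consequently $h^2_j(\mathbf{x}) = f\bigl(\mathbf{\widehat{W}}^{2}_j \mathbf{\widehat{h}}^{1}(\mathbf{x}) + \widehat{b}^2_j + R_j(\mathbf{x})\bigr)$ with quadratic remainder
\[
R_j(\mathbf{x}) \;=\; \frac{\gamma_1\gamma_2}{2\,s'(0)}\, \E_{P(\mathbf{h}^1\mid\mathbf{x})}\!\left[ s''(\xi)\bigl(\mathbf{W}^{2}_j \mathbf{h}^{1} + b^2_j\bigr)^2 \right].
\]

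Third, I would bound $|R_j|$ by combining three facts: $|s''|\le 1$ by hypothesis on $s$; $h^1_i \in \{0,1\}$ forces the worst-case inequality $\bigl(\mathbf{W}^{2}_j \mathbf{h}^{1} + b^2_j\bigr)^2 \le \bigl(\sum_i |W^2_{ij}| + |b^2_j|\bigr)^2$; and inserting the transferred values turns the right-hand side into $\gamma_2^{-2}\bigl(\sum_i |\widehat{W}^2_{ij}| + |\widehat{b}^2_j|/\gamma_1\bigr)^2$. Combining with the $\gamma_1\gamma_2/(2s'(0))$ prefactor and simplifying gives exactly the claimed bound on $|R_j|$. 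Finally, $|f'|\le 1$ makes $f$ $1$-Lipschitz, so $|h^2_j(\mathbf{x}) - \widehat{h}^2_j(\mathbf{x})| \le |R_j(\mathbf{x})|$, completing the proof.

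The main point to get right, rather than a genuine obstacle, is the cancellation of the zeroth- and first-order Taylor terms. The $-s(0)$ in the definition of $\mathbf{h}^2$ kills the constant, and the factor $1/s'(0)$ engineered into $\alpha_2$ is precisely what rescales the linear term back into the DNN pre-activation; without both design choices the remainder would be $O(1)$ rather than $O(1/\gamma_2)$, and the theorem would fail. Everything else is routine bookkeeping.
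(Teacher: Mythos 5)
Your proposal is correct and follows essentially the same route as the paper's proof: the exact first-layer identity from the choice $\alpha_1=1/\gamma_1$, the Taylor expansion of $s$ about $0$ with the linear term collapsing to the DNN pre-activation via the choice of $\alpha_2$, the remainder bound from $|s''|\le 1$ and $\mathbf{h}^1\in\{0,1\}^{N^1}$, and the final step via $|f'|\le 1$. No substantive differences to report.
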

The proof of the above theorem is presented in Section \ref{sec:proofVCSig}.
Our proof  is built upon the first-order Taylor expansion of non-linear function $s(x)$.
Theorem \ref{thm:2samefeature} implies that 
one can make Simplified-SFNN represent the function values of DNN with bounded errors using a linear transformation.
Furthermore, the errors can be made arbitrarily small
by choosing large $\gamma_2$, i.e., 
$\lim \limits_{\gamma_2 \to \infty} \left| h_j^2\left(\mathbf{x}\right) 
-\widehat{h}_j^2\left(\mathbf{x}\right) \right| = 0,$ $\forall j,\mathbf{x}\in D.$
Figure \ref{fig:fig1c} shows that knowledge transferring loss decreases 
as $\gamma_2$ increases on MNIST classification. 
Based on this, we choose $\gamma_2=50$ commonly for all experiments. 

\subsection{Why Simplified-SFNN ?} \label{sec:why}


Given a Simplified-SFNN model, the corresponding SFNN can be naturally defined by taking out 
the expectation in \eqref{def:deter}.
As illustrated in Figure \ref{fig:fig1b},
the main difference between SFNN and Simplified-SFNN is that
the randomness of the stochastic layer propagates only to its upper layer in the latter, i.e.,
the randomness of $\mathbf{h}^1$ is averaged out at
its upper units $\mathbf{h}^2$
and does not propagate to $\mathbf{h}^3$ or output $y$.
Hence, Simplified-SFNN is no longer a Bayesian network.
This makes training Simplified-SFNN much easier than SFNN since 
random samples are not required at some layers\footnote{
For example, if one replaces the {first feature maps in the fifth residual unit} of Pre-ResNet having 164 layers \citep{16resnet}
by stochastic ones, then 
the corresponding DNN, Simplified-SFNN and SFNN took 
1 mins 35 secs, 2 mins 52 secs and 16 mins 26 secs per each training epoch, respectively, on our machine
with one Intel CPU (Core i7-5820K 6-Core@3.3GHz) and one NVIDIA GPU (GTX Titan X, 3072 CUDA cores).
Here, we trained both stochastic models 
using the biased estimator \citep{15SFNN} with 10 random samples on CIFAR-10 dataset.}
and consequently the quality of gradient estimations can also be improved, in particular
for unbounded activation functions.
Furthermore, one can use the same approximation procedure \eqref{naive_approx}
to see that Simplified-SFNN approximates SFNN.
However, since Simplified-SFNN still maintains binary random units, 
it uses approximation steps later, in comparison with DNN. 
In summary, Simplified-SFNN is an intermediate model between DNN and SFNN, i.e., DNN $\rightarrow$ Simplified-SFNN $\rightarrow$ SFNN.

\begin{table*}[t] 
\centering
\caption{Test negative log-likelihood (NLL) on MNIST and TFD datasets, where each layer of neural networks contains 200 hidden units. All Simplified-SFNNs are constructed by replacing the first hidden layer of a baseline DNN with stochastic hidden layer.} 
\label{Tab:mult_exp2}
\resizebox{\textwidth}{!}{
\begin{tabular}{@{}ccclclclll@{}}
\toprule
\multirow{2}{*}{\begin{tabular}[c]{@{}c@{}} Inference Model \end{tabular}} & \multirow{2}{*}{\begin{tabular}[c]{@{}c@{}}      Training Model \end{tabular}}
 & \multicolumn{2}{c}{\begin{tabular}[c]{@{}c@{}} MNIST \end{tabular}} &  \multicolumn{2}{c}{\begin{tabular}[c]{@{}c@{}} TFD \end{tabular}} \\ 
& \multicolumn{1}{c}{} & \multicolumn{1}{c}{2 Hidden Layers}& \multicolumn{1}{c}{3 Hidden Layers} & \multicolumn{1}{c}{2 Hidden Layers}& \multicolumn{1}{c}{3 Hidden Layers} \\ \midrule
\multicolumn{1}{c}{sigmoid-DNN}
& \multicolumn{1}{c}{sigmoid-DNN} & \multicolumn{1}{c}{1.409} &  \multicolumn{1}{c}{1.720} &  \multicolumn{1}{c}{-0.064} &  \multicolumn{1}{c}{0.005}\\
\multicolumn{1}{c}{SFNN}
 & \multicolumn{1}{c}{sigmoid-DNN} &  \multicolumn{1}{c}{0.644} &  \multicolumn{1}{c}{1.076} &   \multicolumn{1}{c}{-0.461} &  \multicolumn{1}{c}{-0.401} \\
 \multicolumn{1}{c}{\begin{tabular}[c]{@{}c@{}} Simplified-SFNN \end{tabular}}
& \multicolumn{1}{c}{fine-tuned by Simplified-SFNN} &  \multicolumn{1}{c}{1.474} &  \multicolumn{1}{c}{1.757}  &  \multicolumn{1}{c}{-0.071} &  \multicolumn{1}{c}{-0.028} \\
\multicolumn{1}{c}{\begin{tabular}[c]{@{}c@{}} SFNN \end{tabular}}
& \multicolumn{1}{c}{fine-tuned by Simplified-SFNN} &  \multicolumn{1}{c}{0.619} &  \multicolumn{1}{c}{0.991} &  \multicolumn{1}{c}{{\bf -0.509}} &  \multicolumn{1}{c}{-0.423} \\ \midrule
\multicolumn{1}{c}{ReLU-DNN}
& \multicolumn{1}{c}{ReLU-DNN}  &  \multicolumn{1}{c}{1.747} &  \multicolumn{1}{c}{1.741}  &  \multicolumn{1}{c}{1.271} &  \multicolumn{1}{c}{1.232} &  \multicolumn{1}{c}{} \\
\multicolumn{1}{c}{SFNN}
& \multicolumn{1}{c}{ReLU-DNN}  &  \multicolumn{1}{c}{-1.019} &  \multicolumn{1}{c}{-1.021}   & \multicolumn{1}{c}{0.823} &  \multicolumn{1}{c}{1.121} &  \multicolumn{1}{c}{}\\
\multicolumn{1}{c}{\begin{tabular}[c]{@{}c@{}} Simplified-SFNN \end{tabular}}
& \multicolumn{1}{c}{fine-tuned by Simplified-SFNN}  &  \multicolumn{1}{c}{2.122} &  \multicolumn{1}{c}{2.226} &  \multicolumn{1}{c}{0.175} &  \multicolumn{1}{c}{0.343} &  \multicolumn{1}{c}{} \\ 
\multicolumn{1}{c}{\begin{tabular}[c]{@{}c@{}} SFNN \end{tabular}}
&  \multicolumn{1}{c}{fine-tuned by Simplified-SFNN} &  \multicolumn{1}{c}{{\bf -1.290}} &  \multicolumn{1}{c}{-1.061} & \multicolumn{1}{c}{-0.380} &  \multicolumn{1}{c}{-0.193} &  \multicolumn{1}{c}{} \\ 
\bottomrule
\end{tabular}}
\end{table*}

The above connection naturally suggests the following training procedure
for both SFNN and Simplified-SFNN: train a baseline DNN first and then fine-tune its corresponding Simplified-SFNN initialized by the transformed DNN parameters.
Finally, the fine-tuned parameters can be used for SFNN as well.
We evaluate the strategy for the MNIST classification.
The MNIST dataset consists of $28\times28$ pixel greyscale images, each containing a digit 0 to 9 with 60,000 training and 10,000 test images. 
For this experiment,
we do not use any data augmentation or pre-processing. 
The loss was minimized using ADAM learning rule \citep{15ADAM} with a mini-batch size of 128. We used an exponentially decaying learning rate.
Hyper-parameters are tuned on the validation set consisting of the last 10,000 training images.
All Simplified-SFNNs are constructed by replacing the first hidden layer of a baseline DNN with stochastic hidden layer.
We first train a baseline DNN for first 200 epochs, and
the trained parameters of DNN are used for initializing those of Simplified-SFNN.
For 50 epochs, we train simplified-SFNN.
We choose the hyper-parameter $\gamma_2=50$ in the parameter transformation.
All Simplified-SFNNs are trained with $M=20$ samples at each epoch, and in the test, we use 500 samples. 
Table \ref{Tab:relu_SSFNN} shows that SFNN under the two-stage training always performs better than
SFNN under a simple transformation \eqref{eq:relutransform} from ReLU-DNN.
More interestingly,
Simplified-SFNN consistently outperforms its baseline DNN due to the stochastic regularizing effect, even
when we train both models using
dropout \citep{12dropout} and batch normalization \citep{15batch}.
This implies that the proposed stochastic model can be used for improving the performance of DNNs and it can be also combined with other regularization methods such as dropout batch normalization.
In order to confirm the regularization effects, 
one can again approximate a trained Simplified-SFNN by a new deterministic DNN which we call DNN$^*$
and is different from its baseline DNN under the following approximation at upper latent
units above binary random units: 
\begin{align} \label{eq:approximation2}
\E_{P\left(\mathbf{h}^{\ell}\mid \mathbf{x}\right)} \left[s\left(\mathbf{W}^{\ell+1}_{j}\mathbf{h}^{\ell}  \right)\right]  \backsimeq  s\left(\E_{P\left(\mathbf{h}^{\ell}\mid \mathbf{x}  \right)}\left[\mathbf{W}^{\ell+1}_{j}\mathbf{h}^{\ell} \right]\right) 
 =s\left( \sum_{i} W^{\ell+1}_{ij}P\left(h^\ell_i=1\mid \mathbf{x} \right) \right). 
\end{align}
We found that DNN$^*$ using fined-tuned parameters of Simplified-SFNN
also outperforms the baseline DNN 
as shown in Table \ref{Tab:relu_SSFNN} and Figure \ref{fig:fig1a}.

\subsection{Training Simplified-SFNN} \label{sec:train}
The parameters of Simplified-SFNN can be learned 
using a variant of the backpropagation algorithm \citep{85BackProp} 
in a similar manner to DNN. However, in contrast to DNN,  
there are two computational issues for simplified-SFNN: 
computing expectations with respect to stochastic units in forward pass and computing gradients in back pass. 
One can notice that both are intractable since they require summations over all possible configurations of all stochastic units.
First, in order to handle the issue in forward pass, we use the following
Monte Carlo approximation for estimating the expectation:
\begin{align*}
\E_{P\left(\mathbf{h}^{1}\mid \mathbf{x}\right)} \left[s\left(\mathbf{W}^{2}_{j}\mathbf{h}^{1}+b_j^{2}\right)\right]
\backsimeq \frac{1}{M} \sum \limits_{m=1}^M s\left(\mathbf{W}^{2}_{j}\mathbf{h}^{(m)}+b_j^{2}\right), 
\end{align*}
where $\mathbf{h}^{(m)} \sim P\left(\mathbf{h}^{1}\mid \mathbf{x}\right)$ and $M$ is the number of samples.
This random estimator is unbiased and has relatively low variance \citep{13SFNN} since its accuracy 
does not depend on the dimensionality of $\mathbf{h}^{1}$ and one can draw samples from the exact distribution.
Next, in order to handle the issue in back pass, we use the following approximation inspired by \citep{15SFNN}:
\begin{align*}
\frac{\partial}{\partial \mathbf{W}^{2}_{j}} \E_{P\left(\mathbf{h}^{1}\mid \mathbf{x}\right)} \left[s\left(\mathbf{W}^{2}_{j}\mathbf{h}^{1}+b_j^{2}\right)\right] 
& \backsimeq \frac{1}{M} \sum\limits_{m} \frac{\partial}{\partial \mathbf{W}^{2}_{j}} s\left(\mathbf{W}^{2}_{j}\mathbf{h}^{(m)} +b_j^{2}\right), \\
 \frac{\partial}{\partial \mathbf{W}^{1}_{i}} \E_{P\left(\mathbf{h}^{1}\mid \mathbf{x}\right)} \left[s\left(\mathbf{W}^{2}_{j}\mathbf{h}^{1}+b_j^{2}\right)\right] 
&  \backsimeq  \frac{W^{2}_{ij}}{M} \sum\limits_{m}  s^{\prime}\left(\mathbf{W}^{2}_{j}\mathbf{h}^{(m)} +b_j^{2}\right) \frac{\partial}{\partial \mathbf{W}^{1}_{i}} P\left(h_i^{1}=1\mid \mathbf{x}\right),
\end{align*}
where $\mathbf{h}^{(m)} \sim P\left(\mathbf{h}^{1}\mid \mathbf{x}\right)$ and $M$ is the number of samples.
In our experiments, we commonly choose $M=20$.
\section{Extensions of Simplified-SFNN} \label{sec:mainresult}
In this section, we describe how 
the network knowledge transferring
between Simplified-SFNN and DNN, i.e.,
Theorem \ref{thm:2samefeature}, generalizes
to multiple layers and general activation functions.

\subsection{Extension to multiple layers} \label{sec:CompVC}
A deeper Simplified-SFNN with $L$ hidden layers can be defined similarly as the case of $L=2$.
We also establish network knowledge transferring between Simplified-SFNN and DNN with $L$ hidden layers as stated in the following theorem.
Here, we assume that
stochastic layers are not consecutive for simpler presentation, but
the theorem is generalizable for consecutive stochastic layers.


\begin{theorem} \label{thm:Lsamefeature}
Assume that both DNN and Simplified-SFNN with $L$ hidden layers have same network structure with non-negative 
activation function $f$.
Given parameters $\{ \mathbf{\widehat W}^\ell,\mathbf{\widehat b}^\ell: { \ell =1,\dots, L} \}$ of DNN 
and input dataset $D$, 
choose the same ones
for Simplified-SFNN initially and modify them for each $\ell$-th stochastic layer and its upper layer
as follows:
\begin{align}
\alpha_\ell
\leftarrow   \frac{1}{\gamma_\ell}, 
\left( \alpha_{\ell+1}, \mathbf{W}^{\ell+1},~\mathbf{b}^{\ell+1} \right)
\leftarrow  
\left(  \frac{\gamma_{\ell}\gamma_{\ell+1}}{ s^{\prime} \left(0\right)}, ~
\frac{\mathbf{\widehat{W}}^{\ell+1}}{\gamma_{\ell+1}} ,~
\frac{\mathbf{\widehat{b}}^{\ell+1}}{\gamma_{\ell}\gamma_{\ell+1} }   \right),
\label{eq:Lparasetup} 
\end{align}
where $\gamma_\ell= \max \limits_{i,\mathbf{x}\in D} ~\left| f\left(\mathbf{\widehat W}^{\ell}_{i} \mathbf{h}^{\ell-1}(\mathbf{x})  +{\widehat b}_i^{\ell}\right)\right|$ and $\gamma_{\ell+1}$ is any positive constant.
Then, it follows that
\begin{align*}
\lim \limits_{\substack{\gamma_{\ell+1} \to \infty \\ \forall~\text{stochastic hidden layer}~\ell}} \left| h_j^L\left(\mathbf{x}\right) 
-\widehat{h}_j^L\left(\mathbf{x}\right) \right|  = 0,
\quad \forall j,\mathbf{x}\in D.
\end{align*}
\end{theorem}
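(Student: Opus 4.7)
The plan is to proceed by induction on the layer index $\ell$, maintaining an upper bound $\ep_\ell$ on $\max_{j, \mathbf{x}\in D}|h^\ell_j(\mathbf{x}) - \widehat{h}^\ell_j(\mathbf{x})|$ and showing that at the output layer $\ep_L$ can be made arbitrarily small by sending every $\gamma_{\ell+1}$ (with $\ell$ stochastic) to infinity. The base case $\ell = 0$ is trivial since $h^0 = \widehat{h}^0 = \mathbf{x}$, so $\ep_0 = 0$.

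For the inductive step, two cases arise. First, when layer $\ell+1$ is deterministic and its immediate lower layer $\ell$ is also deterministic, the transformation \eqref{eq:Lparasetup} leaves $\mathbf{W}^{\ell+1}$ and $\mathbf{b}^{\ell+1}$ unchanged from the DNN. Using $|f'(x)| \leq 1$ (Lipschitz continuity of $f$) one immediately gets
\begin{align*}
|h^{\ell+1}_j(\mathbf{x}) - \widehat{h}^{\ell+1}_j(\mathbf{x})| \;\leq\; \sum_i |\widehat{W}^{\ell+1}_{ij}|\,|h^\ell_i - \widehat{h}^\ell_i| \;\leq\; \|\widehat{\mathbf{W}}^{\ell+1}_j\|_1\,\ep_\ell,
\end{align*}
i.e.\ the error is linearly propagated. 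The non-trivial case is crossing a stochastic layer $\ell$ and its deterministic upper layer $\ell+1$. Here I would repeat the Taylor-expansion argument underlying Theorem~\ref{thm:2samefeature}: apply $|s''| \leq 1$ to expand $s(\mathbf{W}^{\ell+1}_j \mathbf{h}^\ell + b^{\ell+1}_j)$ around $0$, take expectation using that the marginal $P(h^\ell_i = 1 \mid \cdot) = f(\cdot)/\gamma_\ell$, and observe that the rescalings $\alpha_\ell = 1/\gamma_\ell$ and $\alpha_{\ell+1} = \gamma_\ell \gamma_{\ell+1}/s'(0)$ together with the weight transformations conspire so that the first-order Taylor term exactly reproduces $\widehat{h}^{\ell+1}_j$. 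The second-order remainder is then of order $\gamma_\ell / \gamma_{\ell+1}$, as in the two-layer proof.

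Combining the two cases yields a recursion of the form $\ep_{\ell+1} \leq A_\ell\,\ep_{\ell-1} + B_\ell/\gamma_{\ell+1}$ across every stochastic/upper-layer pair, and $\ep_{\ell+1} \leq A_\ell\,\ep_\ell$ across every pair of consecutive deterministic layers, where $A_\ell, B_\ell$ are finite constants depending only on the fixed DNN weights, biases, and the $\gamma_{\ell'}$. Unrolling this recursion to layer $L$ gives a bound of the form
\begin{align*}
\ep_L \;\leq\; \sum_{\ell \in \mathcal{S}} \frac{C_\ell}{\gamma_{\ell+1}},
\end{align*}
where $\mathcal{S}$ is the set of stochastic layer indices and each $C_\ell$ is a finite layer-dependent constant independent of $\{\gamma_{\ell'+1} : \ell' \in \mathcal{S}\}$. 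Sending every $\gamma_{\ell+1}$ with $\ell \in \mathcal{S}$ to infinity then forces $\ep_L \to 0$, which is exactly the claim.

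The main obstacle I anticipate is in the stochastic-layer inductive step. Theorem~\ref{thm:2samefeature} is stated for a clean input $\mathbf{x}$, but by the time we reach a deeper stochastic layer $\ell$, its input $\mathbf{h}^{\ell-1}(\mathbf{x})$ is only an approximation of the DNN value $\widehat{\mathbf{h}}^{\ell-1}(\mathbf{x})$ up to the previously accumulated error $\ep_{\ell-1}$. Re-running the Taylor expansion requires verifying that (i) the supremum $\gamma_\ell$ defined via $\widehat{\mathbf{h}}^{\ell-1}$ still upper-bounds $f(\mathbf{W}^\ell_i \mathbf{h}^{\ell-1} + b^\ell_i)$ up to a vanishing perturbation, and (ii) the Lipschitz-propagated $\ep_{\ell-1}$ enters additively rather than multiplicatively with $\gamma_{\ell+1}$ in the remainder. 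Both are handled by taking the $\gamma_{\ell+1} \to \infty$ limits in the natural inside-out order (deepest stochastic layer first), so that the input perturbation at layer $\ell$ is already $o(1)$ before the Taylor remainder at layer $\ell+1$ is analyzed; this keeps the two error sources cleanly separated in the final bound.
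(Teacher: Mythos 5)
Your proposal is correct and follows essentially the same route as the paper, which formalizes your two inductive cases as separate error-propagation lemmas (Lemma \ref{le:errorbound} for deterministic layers, giving the linear amplification $B N^{\ell-1}\widehat{W}^{\ell}_{\max}$, and Lemma \ref{le:errorbound2} for a stochastic layer and its upper layer, giving the same linear term plus a Taylor remainder of order $\gamma_\ell/\gamma_{\ell+1}$) and then unrolls them into exactly the bound $\sum_{\ell\,\text{stochastic}} \tau_\ell\,(\cdot)/\gamma_{\ell+1}$ you describe. The subtlety you flag about $\gamma_\ell$ is sidestepped in the paper by defining $\gamma_\ell$ directly in terms of the Simplified-SFNN activations $\mathbf{h}^{\ell-1}(\mathbf{x})$ rather than $\widehat{\mathbf{h}}^{\ell-1}(\mathbf{x})$, so the truncation at $1$ is exact and no limit-ordering argument is needed.
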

The above theorem again implies that it is possible to transfer knowledge from DNN to Simplified-SFNN by choosing large $\gamma_{l+1}$.
The proof of Theorem \ref{thm:Lsamefeature} is
similar to that of Theorem \ref{thm:2samefeature} and given
in Section \ref{sec:appendix}.

\subsection{Extension to general activation functions} \label{sec:General}
In this section, we describe an extended version of Simplified-SFNN which can utilize any activation function.
To this end, we modify the definitions of stochastic layers and their upper layers
by introducing certain additional terms.
If the $\ell$-th hidden layer is stochastic, 
then we slightly modify the original definition \eqref{def:pro} as follows: 
\begin{align*}
\label{def:extpro} \nonumber
P\left(\mathbf{h}^\ell\mid\mathbf{x}\right) = \prod \limits_{i=1}^{N^\ell} P\left(h_i^\ell \mid \mathbf{x}\right) 
\text{with} ~~
P\left(h_i^{\ell}=1 \mid \mathbf{x}\right) = \min\bigg\{\alpha_{\ell} f\left(\mathbf{W}^{1}_{i} \mathbf{x}+b_i^{1} + \frac{1}{2}\right) ,1\bigg\},
\end{align*}
where $f:\mathbb{R}\rightarrow \mathbb{R}$ is a non-linear (possibly, negative)
activation function with $\left| f^\prime(x) \right| \leq 1$ for all $x\in \mathbb R$.
In addition, we re-define its upper layer as follows:
\begin{align*}
 \mathbf{h}^{\ell+1}\left(\mathbf{x}\right) =\bigg[f \Big( \alpha_{\ell+1} \Big(   \E_{P\left(\mathbf{h}^{\ell}\mid \mathbf{x}\right)} \left[s\left(\mathbf{W}^{\ell+1}_{j}\mathbf{h}^{\ell}+b_j^{\ell+1}\right)\right]  
  - s\left(0\right) {-\frac{s^{\prime}\left(0\right)}{2} \sum_i W_{ij}^{\ell+1}} \Big) \Big):\forall j  \bigg], 
\end{align*}
where $\mathbf{h}^0 (\mathbf{x})=\mathbf{x}$ and
$s:\mathbb{R}\rightarrow \mathbb{R}$ is a differentiable function with 
$\left|s^{\prime\prime}(x)\right|\leq 1$ for all $x\in \mathbb R$. 

Under this general Simplified-SFNN model, we also show that transferring network knowledge from DNN to Simplified-SFNN is possible as stated in the following theorem.
Here, we again assume that
stochastic layers are not consecutive for simpler presentation.

\begin{theorem} \label{thm:LGsamefeature}
Assume that both DNN and Simplified-SFNN with $L$ hidden layers have same network structure with non-linear activation function $f$.
Given parameters $\{ \mathbf{\widehat W}^\ell,\mathbf{\widehat b}^\ell: { \ell =1,\dots, L} \}$ of DNN 
and input dataset $D$, 
choose the same ones for Simplified-SFNN initially and modify them for each $\ell$-th stochastic layer and its upper layer as follows:
\begin{align*}
\alpha_\ell
\leftarrow   \frac{1}{2\gamma_\ell}, 
\left( \alpha_{\ell+1}, \mathbf{W}^{\ell+1},~\mathbf{b}^{\ell+1} \right)
\leftarrow  
\left(  \frac{2\gamma_{\ell}\gamma_{\ell+1}}{ s^{\prime} (0)}, ~
\frac{\mathbf{\widehat{W}}^{\ell+1}}{\gamma_{\ell+1}} ,~
\frac{\mathbf{\widehat{b}}^{\ell+1} }{2\gamma_{\ell}\gamma_{\ell+1} }  \right),
\end{align*}
where $\gamma_\ell= \underset{i,\mathbf{x}\in D}{{\max}} ~\left|f\left(\mathbf{\widehat W}^{\ell}_{i} \mathbf{h}^{\ell-1}(\mathbf{x})  +{\widehat b}_i^{\ell}\right)\right|$,
and $\gamma_{\ell+1}$ is any positive constant. 
Then, 
it follows that
\begin{align*}
\lim \limits_{\substack{\gamma_{\ell+1} \to \infty \\ \forall~\text{stochastic hidden layer}~ \ell}} \left| h_j^L\left(\mathbf{x}\right) 
-\widehat{h}_j^L\left(\mathbf{x}\right) \right|  = 0,
\quad \forall j,\mathbf{x}\in D.
\end{align*}
\end{theorem}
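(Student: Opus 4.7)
The plan is to mirror the proof of Theorem \ref{thm:2samefeature}: a first-order Taylor expansion of $s$ at $0$, but with the additive constants $+\tfrac{1}{2}$ in the stochastic layer and $-\tfrac{s'(0)}{2}\sum_i W^{\ell+1}_{ij}$ in its upper layer engineered precisely so that, after the parameter transformation, the linear term of the Taylor expansion recovers the DNN preactivation exactly even when $f$ takes negative values. Modulo these shifts, the whole argument reduces to the non-negative case of Theorem \ref{thm:Lsamefeature}, which is itself a layer-wise iteration of Theorem \ref{thm:2samefeature}.

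Concretely, I would first consider a single stochastic layer $\ell$ and its upper deterministic layer $\ell+1$, assuming by outer induction on $\ell$ that all hidden values below agree with those of the DNN. Under the prescribed transformation, a direct computation from the definition of the stochastic layer gives $\E[h_i^\ell\mid\mathbf{x}] - \tfrac{1}{2} = \tfrac{1}{2\gamma_\ell}\widehat{h}_i^\ell(\mathbf{x})$. Taylor-expanding $s$ at $0$ to second order and taking expectations,
\begin{equation*}
\E\bigl[s(\mathbf{W}^{\ell+1}_j \mathbf{h}^\ell + b^{\ell+1}_j)\bigr] = s(0) + s'(0)\bigl(\mathbf{W}^{\ell+1}_j \E[\mathbf{h}^\ell] + b^{\ell+1}_j\bigr) + R_j,
\end{equation*}
with $|R_j|\le \tfrac{1}{2}\E\bigl[(\mathbf{W}^{\ell+1}_j\mathbf{h}^\ell + b^{\ell+1}_j)^2\bigr]$ from $|s''|\le 1$. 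Subtracting $s(0) + \tfrac{s'(0)}{2}\sum_i W^{\ell+1}_{ij}$ and multiplying by $\alpha_{\ell+1} = 2\gamma_\ell\gamma_{\ell+1}/s'(0)$, the affine part collapses exactly to the DNN preactivation $\widehat{\mathbf{W}}^{\ell+1}_j\widehat{\mathbf{h}}^\ell(\mathbf{x}) + \widehat{b}_j^{\ell+1}$, so the only residual is $\alpha_{\ell+1}R_j$. Since $\mathbf{W}^{\ell+1} = O(1/\gamma_{\ell+1})$ and $b^{\ell+1} = O(1/(\gamma_\ell\gamma_{\ell+1}))$, the quadratic expectation inside $R_j$ is $O(1/(\gamma_\ell\gamma_{\ell+1})^2)$ times DNN-dependent constants, whence $\alpha_{\ell+1}R_j = O(1/\gamma_{\ell+1})$. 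Applying $f$ with $|f'|\le 1$ then yields $|h_j^{\ell+1}(\mathbf{x}) - \widehat{h}_j^{\ell+1}(\mathbf{x})| = O(1/\gamma_{\ell+1})$.

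For the full $L$-layer statement, I would then propagate this single-layer estimate through all subsequent deterministic layers using the Lipschitz property of $f$ and the DNN weight norms, exactly as in the proof of Theorem \ref{thm:Lsamefeature}. The total error decomposes as a finite sum of per-stochastic-layer contributions, each of which can independently be driven to zero by sending the corresponding $\gamma_{\ell+1}\to\infty$, giving the claimed limit. The main obstacle, as in the multi-layer non-negative case, is tracking how the quadratic Taylor remainder depends uniformly on $\mathbf{x}\in D$ when $f$ is sign-changing and how these residuals compound across deterministic layers above a stochastic one; once one observes that $\gamma_\ell$ bounds $|\widehat{h}^\ell|$ uniformly on $D$ and that the Bernoulli nature of $\mathbf{h}^\ell$ reduces $\E[(\mathbf{W}^{\ell+1}_j\mathbf{h}^\ell + b^{\ell+1}_j)^2]$ to a polynomial in the transformed weight norms we already control, the remaining bookkeeping is identical to that of Theorem \ref{thm:Lsamefeature}.
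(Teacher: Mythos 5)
Your proposal is correct and follows exactly the route the paper intends: the paper omits this proof, stating it is a direct adaptation of the proof of Theorem \ref{thm:Lsamefeature}, and your argument carries out precisely that adaptation---verifying that the $+\tfrac{1}{2}$ shift in the stochastic layer and the $-\tfrac{s'(0)}{2}\sum_i W^{\ell+1}_{ij}$ correction in its upper layer make the linear Taylor term collapse to the DNN preactivation, bounding the second-order remainder by $O(1/\gamma_{\ell+1})$ after multiplication by $\alpha_{\ell+1}$, and propagating per-stochastic-layer errors through the deterministic layers as in Lemmas \ref{le:errorbound} and \ref{le:errorbound2}. (Your intermediate claim that the quadratic expectation is $O\bigl(1/(\gamma_\ell\gamma_{\ell+1})^2\bigr)$ slightly overstates the $\gamma_\ell$-decay---only the bias contribution carries the extra $1/\gamma_\ell$---but your final estimate $\alpha_{\ell+1}R_j = O(1/\gamma_{\ell+1})$ is the correct one and the argument stands.)
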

We omit the proof of the above theorem
since it is somewhat
direct adaptation of that of Theorem \ref{thm:Lsamefeature}.

\section{Proofs of Theorems}
\subsection{Proof of Theorem \ref{thm:2samefeature}} \label{sec:proofVCSig}
First consider the first hidden layer, i.e., stochastic layer. 
Let $\gamma_1 = \max \limits_{i,\mathbf{x}\in D} f\left( \mathbf{\widehat W}^{1}_{i} \mathbf{x}+{\widehat b}_i^{1}\right) $ 
be the maximum value of hidden units in DNN.
If we initialize the parameters $\left( \alpha_1, \mathbf{W}^{1},~\mathbf{b}^{1} \right)
\leftarrow  \left( \frac{1}{\gamma_1},~
\mathbf{\widehat{W}}^{1},~
\mathbf{\widehat{b}}^{1}   \right)$,
then the marginal distribution of each hidden unit $i$ becomes
\begin{align}
P\left(h_i^1=1\mid\mathbf{x}, \mathbf{W}^1,\mathbf{b}^1\right)  
= min\bigg\{ \alpha_1 f\left(\mathbf{\widehat W}^1_{i} \mathbf{x}+{\widehat b}_i^1\right)  ,1\bigg\}  
= \frac{1}{\gamma_1} f\left(\mathbf{\widehat W}^1_{i} \mathbf{x}+{\widehat b}_i^1\right) , ~ \forall i, \mathbf{x}\in D.
\label{eq:firstlayer}
\end{align}
Next consider the second hidden layer.
From Taylor's theorem, there exists a value $z$ between $0$ and $x$ such that 
$s(x) 
= s(0) + s^{\prime}(0)x  +R(x)$, where $R(x) = \frac{ s^{\prime\prime} (z) x^2}{2!}  $.
Since we consider a binary random vector, i.e., $\mathbf{h}^1 \in \{0,1 \}^{N^1}$,
one can write
\begin{align}
&\E_{P\left(\mathbf{h}^1\mid \mathbf{x}\right)} \left[s\left(\beta_j\left(\mathbf{h}^1\right)\right)\right] \nonumber \\ 
& = \sum \limits_{\mathbf{h}^1} \left( s\left(0\right) + s^{\prime}\left(0\right)\beta_j\left(\mathbf{h}^1\right)
+ R\left(\beta_j\left(\mathbf{h}^1\right) \right) \right) P\left(\mathbf{h}^1\mid \mathbf{x}\right) \nonumber \\ 
&= s\left(0\right)
+s^{\prime}\left(0\right) \left(  \sum \limits_{i} W^2_{ij}  P(h_i^1=1\mid \mathbf{x}) + b_j^2  \right) 
+ \E_{P\left(\mathbf{h}^1\mid\mathbf{x}\right)} \left[ R(\beta_j(\mathbf{h}^1)) \right], 
\end{align}
where $ \beta_j\left(\mathbf{h}^1\right) := \mathbf{W}^2_{j}\mathbf{h}^1+b_j^2$ is the incoming signal. 
From \eqref{def:deter} and \eqref{eq:firstlayer}, for every hidden unit $j$, it follows that 
\begin{align*}
h_j^2\left(\mathbf{x}; \mathbf{W}^2,\mathbf{b}^2\right)  
= &f \Bigg( \alpha_2  \Bigg( s'(0) \left( \frac{1}{\gamma_1} \sum \limits_{i} W^2_{ij} {\widehat h}_i^1\left(\mathbf{x}\right)
+b_j^2 \right) 
 + \E_{P\left(\mathbf{h}^1\mid \mathbf{x}\right)}\left[ R\left(\beta_j\left(\mathbf{h}^1\right)\right)\right] \Bigg) \Bigg).
\end{align*}
Since we assume that $\left| f'(x) \right| \leq 1$, the following inequality holds:
\begin{align}
&\left\lvert h_j^2(\mathbf{x}; \mathbf{W}^2,\mathbf{b}^2) - f \left( \alpha_2  s'(0) \left( \frac{1}{\gamma_1} \sum \limits_{i} W^2_{ij} {\widehat h}_i^1(\mathbf{x})
+b_j^2 \right)    \right)  \right\rvert  \notag \\
&\leq  \left| \alpha_2  \E_{P\left(\mathbf{h}^1\mid \mathbf{x}\right)}\left[ R(\beta_j(\mathbf{h}^1))\right] \right| 
\leq \frac{\alpha_2}{2}
\E_{P(\mathbf{h}^1\mid\mathbf{x})} \left[\left(  \mathbf{W}^2_{j}\mathbf{h}^1+b_j^2 \right)^2 \right],
\end{align}
where we use 
$|s^{\prime\prime}(z)|<1$ for the last inequality.
Therefore, it follows that 
\begin{align*}
\left| h_j^2\left(\mathbf{x}; \mathbf{W}^2,\mathbf{b}^2\right) 
-\widehat{h}_j^2\left(\mathbf{x} ; \mathbf{\widehat W}^2,\mathbf{\widehat b}^2\right) \right| 
\leq 
\frac{\gamma_1\left(\sum_i \left| {\widehat W}_{ij}^2\right|+ {\widehat b}_{j}^2\gamma_1^{-1} \right)^2}{2s' (0)\gamma_2}, \quad \forall j,
\end{align*}
since we set $\left( \alpha_2, \mathbf{W}^{2},~\mathbf{b}^{2} \right)
\leftarrow  
\left(  \frac{\gamma_2\gamma_1}{ s^{\prime} (0)}, ~
\frac{ \mathbf{\widehat{W}}^{2}}{\gamma_2},~
\frac{\gamma_1^{-1}}{\gamma_2 } \mathbf{\widehat{b}}^{2}  \right)$. 
This completes the proof of Theorem \ref{thm:2samefeature}.

\subsection{Proof of Theorem \ref{thm:Lsamefeature}} \label{sec:appendix}

For the proof of Theorem \ref{thm:Lsamefeature},
we first state the two key lemmas on error propagation in Simplified-SFNN.
\begin{lemma} \label{le:errorbound}
Assume that there exists some positive constant $B$ such that 
\begin{align*}
\left| h_i^{\ell-1} \left(\mathbf{x}\right) 
-\widehat{h}_i^{\ell-1} \left(\mathbf{x}\right) \right| 
 \leq B, \quad \forall i, \mathbf{x} \in D,
\end{align*}
and the $\ell$-th hidden layer of Simplified-SFNN is standard deterministic layer as defined in \eqref{ea:ENN}.
Given parameters $\{ \mathbf{\widehat W}^\ell,\mathbf{\widehat b}^\ell \}$ of DNN, choose same ones for Simplified-SFNN.
Then, the following inequality holds:
\begin{align*}
\left| h_j^{\ell} \left(\mathbf{x}\right) 
-\widehat{h}_j^{\ell} \left(\mathbf{x}\right) \right| 
 \leq B N^{\ell-1}{\widehat W}_{\max}^\ell, \quad \forall j, \mathbf{x} \in D.
\end{align*}
where ${\widehat W}_{\max}^\ell= \max\limits_{ij} \left|{\widehat W}_{ij}^\ell\right|$.
\end{lemma}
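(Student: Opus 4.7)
The plan is to exploit the fact that, once we copy the DNN parameters into the deterministic Simplified-SFNN layer, the only source of discrepancy between $h_j^\ell(\mathbf{x})$ and $\widehat{h}_j^\ell(\mathbf{x})$ is the discrepancy already present in the $(\ell-1)$-th layer, which is controlled by the hypothesis $|h_i^{\ell-1}(\mathbf{x}) - \widehat{h}_i^{\ell-1}(\mathbf{x})| \le B$. So this is essentially a one-step Lipschitz propagation argument for a standard deterministic layer.

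Concretely, I would first write out the two quantities using the definition \eqref{ea:ENN} together with the assumption $\mathbf{W}^\ell = \mathbf{\widehat W}^\ell$, $\mathbf{b}^\ell = \mathbf{\widehat b}^\ell$:
\begin{align*}
h_j^\ell(\mathbf{x}) &= f\Big(\sum_i \widehat W^\ell_{ji}\, h_i^{\ell-1}(\mathbf{x}) + \widehat b_j^\ell\Big), \\
\widehat h_j^\ell(\mathbf{x}) &= f\Big(\sum_i \widehat W^\ell_{ji}\, \widehat h_i^{\ell-1}(\mathbf{x}) + \widehat b_j^\ell\Big).
\end{align*}
Then I would subtract, use the mean value theorem together with the standing assumption $|f'(x)| \le 1$ on the activation, to peel off the outer $f$ and bound the difference by the difference of the pre-activations.

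Next I would apply the triangle inequality to the pre-activation difference, pull the $|\widehat W^\ell_{ji}|$ out, invoke the definition $\widehat W_{\max}^\ell = \max_{ij}|\widehat W^\ell_{ij}|$, and finally use the inductive hypothesis $|h_i^{\ell-1}(\mathbf{x}) - \widehat h_i^{\ell-1}(\mathbf{x})| \le B$ term by term. The sum has $N^{\ell-1}$ terms, each bounded by $\widehat W_{\max}^\ell \cdot B$, so altogether the bound $B N^{\ell-1} \widehat W_{\max}^\ell$ falls out immediately.

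There is no real obstacle here; the lemma is routine and is purely a statement about how a 1-Lipschitz nonlinearity composed with a linear map propagates $\ell_\infty$ errors, with the only slightly delicate bookkeeping being to note that the bias term cancels and that the assumption $|f'|\le 1$ (already used for $f$ in Theorem~\ref{thm:2samefeature}) is exactly what is needed to avoid picking up an extra Lipschitz constant. The lemma will then serve in the proof of Theorem~\ref{thm:Lsamefeature} as the tool for propagating the stochastic-layer error established in Theorem~\ref{thm:2samefeature} through any subsequent purely deterministic layers.
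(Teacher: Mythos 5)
Your proposal is correct and follows essentially the same route as the paper's proof: both write the layer output at the perturbed pre-activation, use the 1-Lipschitz property $|f'|\le 1$ to reduce to the pre-activation difference, and then bound $\bigl|\sum_i \widehat W^{\ell}_{ij}\,\epsilon_i\bigr|$ by $B N^{\ell-1}\widehat W_{\max}^{\ell}$ via the triangle inequality. Your term-by-term application of the triangle inequality is in fact slightly cleaner than the paper's intermediate step $B\,\bigl|\sum_i \widehat W^{\ell}_{ij}\bigr|$, but the argument and the final bound are the same.
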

\begin{proof}
See Section \ref{sec:appendix2}.
\end{proof}
\begin{lemma} \label{le:errorbound2}
Assume that there exists some positive constant $B$ such that 
\begin{align*}
\left| h_i^{\ell-1} \left(\mathbf{x}\right) 
-\widehat{h}_i^{\ell-1} \left(\mathbf{x}\right) \right| 
 \leq B, \quad \forall i, \mathbf{x} \in D,
\end{align*}
and the $\ell$-th hidden layer of simplified-SFNN is stochastic layer.
Given parameters $\{ \mathbf{\widehat W}^{\ell},\mathbf{\widehat W}^{\ell+1},\mathbf{\widehat b}^{\ell},\mathbf{\widehat b}^{\ell+1} \}$ of DNN, choose those of Simplified-SFNN as follows:
\begin{align*}
\alpha_\ell
\leftarrow   \frac{1}{\gamma_\ell}, 
 \left( \alpha_{\ell+1}, \mathbf{W}^{\ell+1},~\mathbf{b}^{\ell+1} \right)
\leftarrow  
\left(  \frac{\gamma_{\ell}\gamma_{\ell+1}}{ s^{\prime} \left(0\right)}, ~
\frac{\mathbf{\widehat{W}}^{\ell+1}}{\gamma_{\ell+1}} ,~
\frac{\mathbf{\widehat{b}}^{\ell+1}}{\gamma_{\ell}\gamma_{\ell+1} }   \right), 
\end{align*}
where $\gamma_\ell= \max \limits_{j,\mathbf{x}\in D} ~\left| f\left(\mathbf{\widehat W}^{\ell}_{j} \mathbf{h}^{\ell-1}(\mathbf{x})  +{\widehat b}_j^{\ell}\right)\right|$ and $\gamma_{\ell+1}$ is any positive constant.
Then, for all $j,\mathbf{x}\in D$, it follows that 
\begin{align*}
\left| h_k^{\ell+1}\left(\mathbf{x}\right) 
-\widehat{h}_k^{\ell+1} \left(\mathbf{x}\right) \right| 
\leq & BN^{\ell-1}N^{\ell}{\widehat W}_{\max}^\ell {\widehat W}_{\max}^{\ell+1} 
 + \left|\frac{\gamma_{\ell}\left( N^{\ell}{\widehat W}_{\max}^{\ell+1} + {\widehat b}_{\max}^{\ell+1} \gamma_{\ell}^{-1} \right)^2}{2s' (0)\gamma_{\ell+1}}   \right|,
\end{align*}
where
${\widehat b}_{\max}^{\ell} = \max \limits_j
 \left|{\widehat b}_j^{\ell}\right|$ and
${\widehat W}_{\max}^\ell= \max\limits_{ij} \left|{\widehat W}_{ij}^\ell\right|$.
\end{lemma}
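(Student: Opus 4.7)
The plan is to mirror the proof of Theorem \ref{thm:2samefeature}, but now accounting for the fact that the input to the stochastic layer, $\mathbf{h}^{\ell-1}(\mathbf{x})$, is no longer equal to $\widehat{\mathbf{h}}^{\ell-1}(\mathbf{x})$: it differs by at most $B$ per coordinate. The final bound should naturally split into two pieces, one coming from this ``propagated'' error $B$ and one from the Taylor remainder introduced at the upper layer $\ell+1$; these will match the two summands in the stated inequality.

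First I would analyze the marginal at the $\ell$-th (stochastic) layer. Since the parameter choice keeps $\mathbf{W}^{\ell} = \widehat{\mathbf{W}}^{\ell}$, $\mathbf{b}^{\ell}=\widehat{\mathbf{b}}^{\ell}$ and sets $\alpha_{\ell} = 1/\gamma_{\ell}$, one has $\gamma_{\ell} P(h_i^{\ell}=1 \mid \mathbf{x}) = \min\{f(\widehat{\mathbf{W}}_i^{\ell} \mathbf{h}^{\ell-1}(\mathbf{x}) + \widehat{b}_i^{\ell}),\, \gamma_{\ell}\}$. Using $|f'| \leq 1$ together with the hypothesis $|h_i^{\ell-1}(\mathbf{x})-\widehat{h}_i^{\ell-1}(\mathbf{x})|\leq B$, a direct Lipschitz bound gives $|\gamma_{\ell} P(h_i^{\ell}=1 \mid \mathbf{x}) - \widehat{h}_i^{\ell}(\mathbf{x})| \leq B\,N^{\ell-1} \widehat{W}_{\max}^{\ell}$ for every $i$ and $\mathbf{x}\in D$; the $\min$ clip, when active, only makes this difference smaller.

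Next I would expand $s(\beta_k(\mathbf{h}^{\ell}))$ with $\beta_k(\mathbf{h}^{\ell}) = \mathbf{W}_k^{\ell+1} \mathbf{h}^{\ell} + b_k^{\ell+1}$ around $0$ exactly as in Section \ref{sec:proofVCSig}, take expectations, and substitute the parameter choices for $\alpha_{\ell+1}$, $\mathbf{W}^{\ell+1}$, $\mathbf{b}^{\ell+1}$. A direct calculation shows that the argument of $f$ defining $h_k^{\ell+1}(\mathbf{x})$ becomes $\sum_i \widehat{W}_{ik}^{\ell+1}\bigl(\gamma_{\ell} P(h_i^{\ell}=1 \mid \mathbf{x})\bigr) + \widehat{b}_k^{\ell+1} + \alpha_{\ell+1} \E[R(\beta_k(\mathbf{h}^{\ell}))]$; the first two terms are the DNN pre-activation $\widehat{\mathbf{W}}_k^{\ell+1}\widehat{\mathbf{h}}^{\ell}(\mathbf{x}) + \widehat{b}_k^{\ell+1}$ up to the error of the previous step. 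Applying $|f'|\leq 1$ and the triangle inequality then bounds $|h_k^{\ell+1}(\mathbf{x}) - \widehat{h}_k^{\ell+1}(\mathbf{x})|$ by $N^{\ell}\widehat{W}_{\max}^{\ell+1}$ times the Step-1 error, which yields the first summand $BN^{\ell-1}N^{\ell}\widehat{W}_{\max}^{\ell}\widehat{W}_{\max}^{\ell+1}$, plus the remainder contribution $|\alpha_{\ell+1} \E[R(\beta_k)]|$.

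Finally I would estimate the remainder: $|\E[R(\beta_k)]| \leq \tfrac{1}{2}\E[\beta_k^2]$ by $|s''|\leq 1$, and since $\mathbf{h}^{\ell}\in\{0,1\}^{N^{\ell}}$, the choice $\mathbf{W}^{\ell+1}=\widehat{\mathbf{W}}^{\ell+1}/\gamma_{\ell+1}$ and $\mathbf{b}^{\ell+1} = \widehat{\mathbf{b}}^{\ell+1}/(\gamma_{\ell}\gamma_{\ell+1})$ yields $|\beta_k| \leq (N^{\ell}\widehat{W}_{\max}^{\ell+1} + \widehat{b}_{\max}^{\ell+1}/\gamma_{\ell})/\gamma_{\ell+1}$. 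Multiplying the squared bound by $\alpha_{\ell+1} = \gamma_{\ell}\gamma_{\ell+1}/s'(0)$ produces exactly the second summand of the stated bound. The main obstacle is clerical: the parameter substitutions must be tracked carefully so that no constant is dropped when combining the propagated and approximation errors, and one should confirm that the clip in $P(h_i^{\ell}=1\mid\mathbf{x})$ is harmless, which follows since it can only decrease the difference being bounded.
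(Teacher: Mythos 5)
Your proposal is correct and follows essentially the same route as the paper's proof: propagate the per-coordinate error $B$ through the stochastic layer's marginal via the Lipschitz bound $|f'|\leq 1$ (giving the $BN^{\ell-1}N^{\ell}{\widehat W}_{\max}^{\ell}{\widehat W}_{\max}^{\ell+1}$ term), then apply the first-order Taylor expansion of $s$ at the $(\ell+1)$-th layer and bound the remainder using $|s''|\leq 1$ and $\mathbf{h}^{\ell}\in\{0,1\}^{N^{\ell}}$ (giving the second term), with identical parameter substitutions. The only cosmetic difference is that the paper carries explicit error variables $\epsilon_j^{\ell}$ where you use direct Lipschitz estimates, and note that the $\min$ clip is in fact never active on $D$ by the very definition of $\gamma_{\ell}$, which is slightly cleaner than arguing it is harmless when active.
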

\begin{proof}
See Section \ref{sec:appendix3}.
\end{proof}

Assume that $\ell$-th layer is first stochastic hidden layer in Simplified-SFNN. Then, from Theorem \ref{thm:2samefeature}, we have  
\begin{align}
 \left| h_j^{\ell+1}\left(\mathbf{x}\right) 
-\widehat{h}_j^{\ell+1} \left(\mathbf{x}\right) \right| 
\leq \left|\frac{\gamma_{\ell}\left( N^{\ell}{\widehat W}_{\max}^{\ell+1} + {\widehat b}_{\max}^{\ell+1} \gamma_{\ell}^{-1} \right)^2}{2s' (0)\gamma_{\ell+1}}   \right|,\quad \forall j, \mathbf{x} \in D.
\label{eq:firsterror}
\end{align}
According to Lemma \ref{le:errorbound} and \ref{le:errorbound2}, the final error generated by the right hand side of \eqref{eq:firsterror} is bounded by
\begin{align}
\frac{\tau_\ell \gamma_{\ell}\left(N^{\ell}{\widehat W}_{\max}^{\ell+1}  + {\widehat b}_{\max}^{\ell+1}\gamma_{\ell}^{-1}\right)^{2}}{2s^{\prime} \left(0\right)\gamma_{\ell+1}},
\label{eq:errorbobobobob}
\end{align}
where $\tau_\ell = \prod \limits_{\ell'=l+2}^L \left( N^{\ell'-1}{\widehat W}_{\max}^{\ell'}\right).$ One can note that every error generated by each stochastic layer is bounded by \eqref{eq:errorbobobobob}.
Therefore, for all $j,\mathbf{x}\in D$, it follows that 
\begin{align*}
\left| h_j^L\left(\mathbf{x}\right) 
-\widehat{h}_j^L\left(\mathbf{x}\right) \right|  
 \leq \sum \limits_{\ell :\text{stochastic hidden layer}}\left( 
\frac{\tau_\ell \gamma_{\ell}\left(N^{\ell}{\widehat W}_{\max}^{\ell+1}  + {\widehat b}_{\max}^{\ell+1}\gamma_{\ell}^{-1}\right)^{2}}{2s^{\prime} \left(0\right)\gamma_{\ell+1}} \right). 
\end{align*}
From above inequality, we can conclude that 
\begin{align*}
\lim \limits_{\substack{\gamma_{\ell+1} \to \infty \\ \forall~\text{stochastic hidden layer}~\ell}} \left| h_j^L\left(\mathbf{x}\right) 
-\widehat{h}_j^L\left(\mathbf{x}\right) \right|  = 0,
\quad \forall j,\mathbf{x}\in D.
\end{align*}
This completes the proof of Theorem \ref{thm:Lsamefeature}.

\subsection{Proof of Lemma \ref{le:errorbound}} \label{sec:appendix2}

From assumption, there exists some constant $\epsilon_i$ such that $\left| \epsilon_i \right|<B$ and 
\begin{align*} h_i^{\ell-1} \left(\mathbf{x}\right)  =\widehat{h}_i^{\ell-1} \left(\mathbf{x}\right)+\epsilon_i, ~~ \forall i, \mathbf{x}.\end{align*}
By definition of standard deterministic layer, it follows that
 \begin{align*}
h_j^{\ell}\left(\mathbf{x} \right) 
=f \left(  \sum \limits_{i} {\widehat W}^{\ell}_{ij} {h}_{i}^{\ell-1}\left(\mathbf{x} \right)
+{\widehat b}_j^{\ell-1}   \right) 
 = f \left(  \sum \limits_{i} {\widehat W}^{\ell}_{ij} {\widehat h}_{i}^{\ell-1}\left(\mathbf{x} \right) + \sum \limits_{i} {\widehat W}^{\ell}_{ij} \epsilon_i
+{\widehat b}_j^{\ell}   \right) .
\end{align*}
Since we assume that $\left| f'(x) \right| \leq 1$, one can conclude that
\begin{align*}
\left| h_j^{\ell}\left(\mathbf{x} \right) -  f \left(  \sum \limits_{i} {\widehat W}^{\ell}_{ij} {\widehat h}_{i}^{\ell-1}\left(\mathbf{x} \right) 
+{\widehat b}_j^{\ell}   \right)\right| 
\leq \left| \sum \limits_{i} {\widehat W}^{\ell}_{ij} \epsilon_i \right|  
 \leq B \left| \sum \limits_{i}  {\widehat W}^{\ell}_{ij} \right| 
\leq B N^{\ell-1}{\widehat W}_{\max}^\ell.
\end{align*}
This completes the proof of Lemma \ref{le:errorbound}.

\subsection{Proof of Lemma \ref{le:errorbound2}} \label{sec:appendix3}

From assumption, there exists some constant $\epsilon^{\ell-1}_i$ such that $\left| \epsilon^{\ell-1}_i \right|<B$ and 
\begin{align}
 h_i^{\ell-1} \left(\mathbf{x}\right)  =\widehat{h}_i^{\ell-1} \left(\mathbf{x}\right)+\epsilon^{\ell-1}_i, ~~ \forall i, \mathbf{x}.
 \label{eq:assump12}
\end{align}
Let $\gamma_\ell = \max \limits_{j,\mathbf{x}\in D} ~\left| f\left(\mathbf{\widehat W}^{\ell}_{j} \mathbf{h}^{\ell-1}(\mathbf{x})  +{\widehat b}_j^{\ell}\right)\right|$ be the maximum value of hidden units.
If we initialize the parameters $\left( \alpha_\ell, \mathbf{W}^{\ell},~\mathbf{b}^{\ell} \right)
\leftarrow  \left( \frac{1}{\gamma_\ell},~
\mathbf{\widehat{W}}^{\ell},~
\mathbf{\widehat{b}}^{\ell}   \right)$,
then the marginal distribution becomes
\begin{align*}
 P\left(h_j^{\ell}=1\mid\mathbf{x}, \mathbf{W}^{\ell},\mathbf{b}^{\ell}\right)  &=  \min \bigg\{ \alpha_\ell f\left(\mathbf{\widehat W}^{\ell}_{j}\mathbf{ h}^{\ell-1}\left( \mathbf{x}\right)+{\widehat b}_j^\ell\right)  ,1\bigg\} \\
&=\frac{1}{\gamma_{\ell}} f\left( \mathbf{\widehat W}^{\ell}_{j}\mathbf{ h}^{\ell-1}\left( \mathbf{x}\right) +{\widehat b}_j^{\ell}\right) ,~ \forall j, \mathbf{x}.
\end{align*}
From \eqref{eq:assump12}, it follows that 
\begin{align*}
P\left(h_j^{\ell}=1\mid \mathbf{x}, \mathbf{W}^{\ell},\mathbf{b}^{\ell}\right)  
=
\frac{1}{\gamma_{\ell}} f\left( \mathbf{\widehat W}^{\ell}_{j}\mathbf{\widehat h}^{\ell-1}\left( \mathbf{x}\right)+ \sum \limits_{i} {\widehat W}^{\ell}_{ij} \epsilon^{\ell-1}_i +{\widehat b}_j^{\ell}\right) , \quad \forall j, \mathbf{x} .
\end{align*}
Similar to Lemma \ref{le:errorbound}, there exists some constant $\epsilon^{\ell}_j$ such that $\left| \epsilon^{\ell}_j \right|<BN^{\ell-1}{\widehat W}_{\max}^\ell$ and 
\begin{align}
 P\left(h_j^{\ell}=1\mid \mathbf{x}, \mathbf{W}^{\ell},\mathbf{b}^{\ell}\right)   = \frac{1}{\gamma_\ell} \left( \widehat{h}_j^{\ell} 
\left(\mathbf{x}\right)+\epsilon^{\ell}_j \right), ~~ \forall j, \mathbf{x}.
\label{eq:eqeq11}
\end{align}
Next, consider the upper hidden layer of stochastic layer.
From Taylor's theorem, there exists a value $z$ between $0$ and $t$ such that 
$s(x) 
= s(0) + s'(0)x  +R(x)$, where $R(x) = \frac{ s'' (z) x^{2}}{2!}  $.
Since we consider a binary random vector, i.e., $\mathbf{h}^{\ell} \in \{0,1 \}^{N^{\ell}}$,
one can write
\begin{align*}
\E_{P(\mathbf{h}^{\ell}\mid \mathbf{x})} [s(\beta_k(\mathbf{h}^{\ell}))] 
& = \sum \limits_{\mathbf{h}^{\ell}} \left( s(0) + s'(0)\beta_k(\mathbf{h}^{\ell})
+ R\left(\beta_k(\mathbf{h}^{\ell}) \right) \right) P(\mathbf{h}^{\ell}\mid \mathbf{x}) \nonumber \\ 
&= s(0)
+s'(0) \left(  \sum \limits_{j} W^{\ell+1}_{jk}  P(h_j^\ell=1\mid \mathbf{x}) + b_k^{\ell+1}  \right) 
 + \sum \limits_{\mathbf{h}^{\ell}} R(\beta_k(\mathbf{h}^{\ell}))  P(\mathbf{h}^{\ell}\mid \mathbf{x}), \nonumber
\end{align*}
where $ \beta_k(\mathbf{h}^{\ell}) = \mathbf{W}^{\ell+1}_{k}\mathbf{h}^{\ell}+b_k^{\ell+1}$ is the incoming signal. 
From \eqref{eq:eqeq11} and above equation, for every hidden unit $k$, we have
\begin{align*}
&h_k^{\ell+1}(\mathbf{x}; \mathbf{W}^{\ell+1},\mathbf{b}^{\ell+1}) \\
&=f \Bigg( \alpha_{\ell+1}  \Bigg( s'(0) \Bigg( \frac{1}{\gamma_\ell} \left( \sum \limits_{j} W^{\ell+1}_{jk} {\widehat h}_j^{\ell}(\mathbf{x})
+ \sum \limits_{j} W^{\ell+1}_{jk} \epsilon^{\ell}_j \right) 
 +b_k^{\ell+1} \Bigg)
+ \E_{P(\mathbf{h}^{\ell}\mid\mathbf{x})}\left[ R(\beta_k(\mathbf{h}^{\ell}))\right]     \Bigg) \Bigg).
\end{align*}
Since we assume that $|f'(x)|<1$, the following inequality holds:
\begin{align}
&\Bigg| h_k^{\ell+1}(\mathbf{x};\mathbf{W}^{\ell+1},\mathbf{b}^{\ell+1}) 
-f \left( \alpha_{\ell+1}  s^{\prime}(0) \left( \frac{1}{\gamma_\ell} \sum \limits_{j} W^{\ell+1}_{ij} {\widehat h}_j^{\ell}(\mathbf{x})
+b_j^{\ell+1} \right)    \right) \Bigg| \nonumber \\ 
&\leq \left|
 \frac{ \alpha_{\ell+1}  s'(0) }{\gamma_\ell} \sum \limits_{j} W^{\ell+1}_{jk} \epsilon^{\ell}_j 
 +  \alpha_{\ell+1} \E_{P(\mathbf{h}^{\ell}\mid\mathbf{x})}\left[ R(\beta_k(\mathbf{h}^{\ell}))\right]  \right| \nonumber \\
 &\leq \left|
 \frac{ \alpha_{\ell+1}  s'(0) }{\gamma_\ell} \sum \limits_{j} W^{\ell+1}_{jk} \epsilon^{\ell}_j \right|
  + \left| \frac{\alpha_{\ell+1}}{2} \E_{P(\mathbf{h}^{\ell}\mid \mathbf{x})}\left[\left( \mathbf{W}^{\ell+1}_{k}\mathbf{h}^{\ell}+b_k^{\ell+1}\right)^2 \right]  \right|,
\label{eq:gohome}
\end{align}
where we use 
$|s^{\prime\prime}(z)|<1$ for the last inequality.
Therefore, it follows that 
\begin{align*}
 \left| h_k^{\ell+1}\left(\mathbf{x}\right) 
-\widehat{h}_k^{\ell+1} \left(\mathbf{x}\right) \right| \leq & ~ BN^{\ell-1}N^{\ell}{\widehat W}_{\max}^\ell {\widehat W}_{\max}^{\ell+1} 
 + \left|\frac{\gamma_{\ell}\left( N^{\ell}{\widehat W}_{\max}^{\ell+1} + {\widehat b}_{\max}^{\ell+1} \gamma_{\ell}^{-1} \right)^2}{2s' (0)\gamma_{\ell+1}}   \right|,
\end{align*}
since we set $\left( \alpha_{\ell+1}, \mathbf{W}^{\ell+1},~\mathbf{b}^{\ell+1} \right)
\leftarrow  
\left(  \frac{\gamma_{\ell+1}\gamma_{\ell}}{ s^{\prime} (0)}, ~
\frac{\mathbf{\widehat{W}}^{\ell+1}}{\gamma_{\ell+1}} ,~
\frac{\gamma_{\ell}^{-1}\mathbf{\widehat{b}}^{\ell+1}}{\gamma_{\ell+1} }   \right)$. 
This completes the proof of Lemma \ref{le:errorbound2}.

\section{Experimental Results} \label{sec:uaiexp}
\begin{figure*} [t!] \centering
\subfigure[]
{\epsfig{file=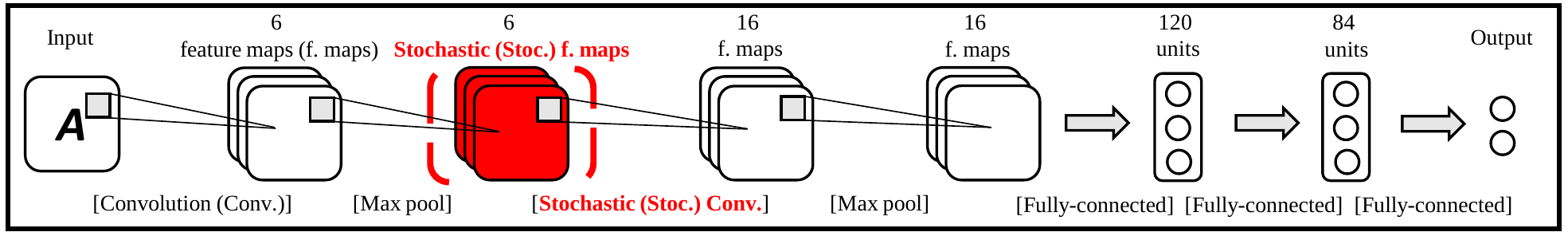, width=1\textwidth}\label{fig:fig3a}}
\,
\subfigure[]
{\epsfig{file=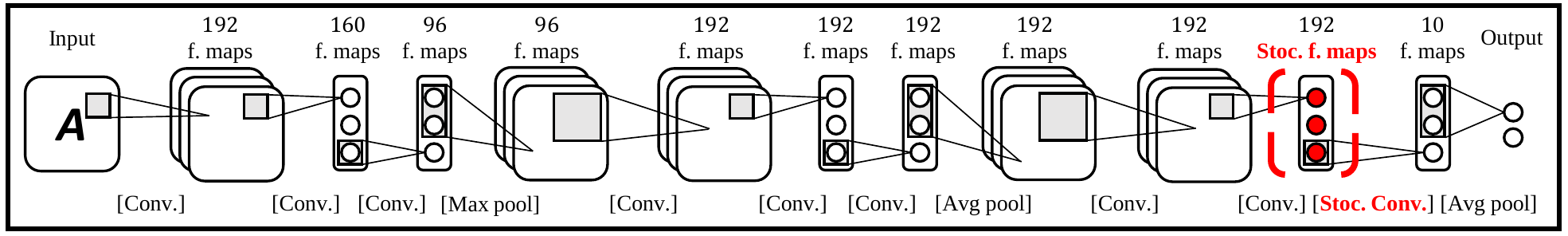, width=1\textwidth}\label{fig:fig3b}}
\,
\subfigure[]
{\epsfig{file=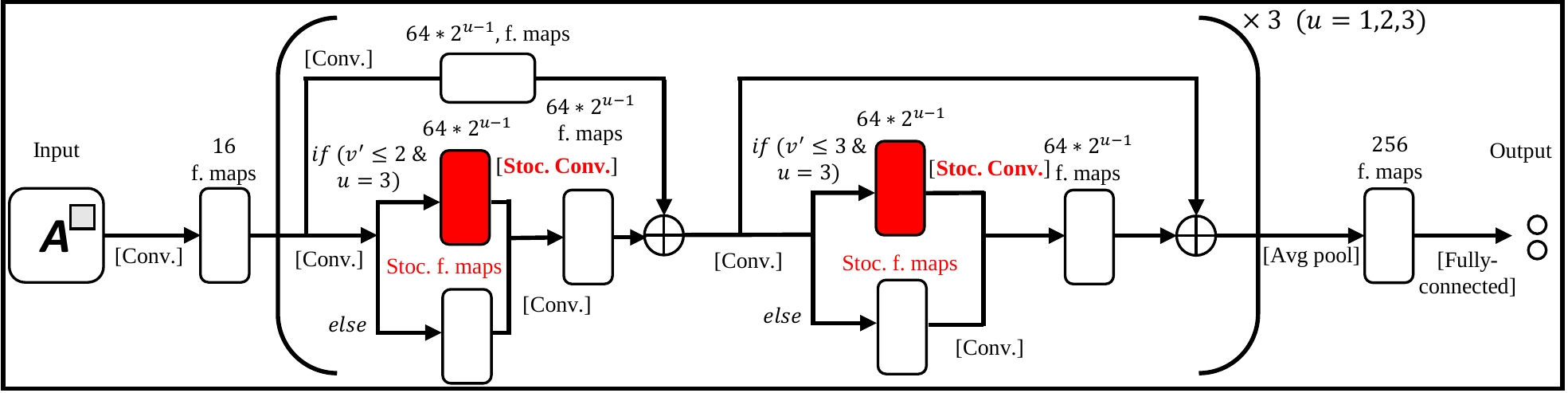, width=1\textwidth}\label{fig:fig3c}}
\,
\subfigure[]
{\epsfig{file=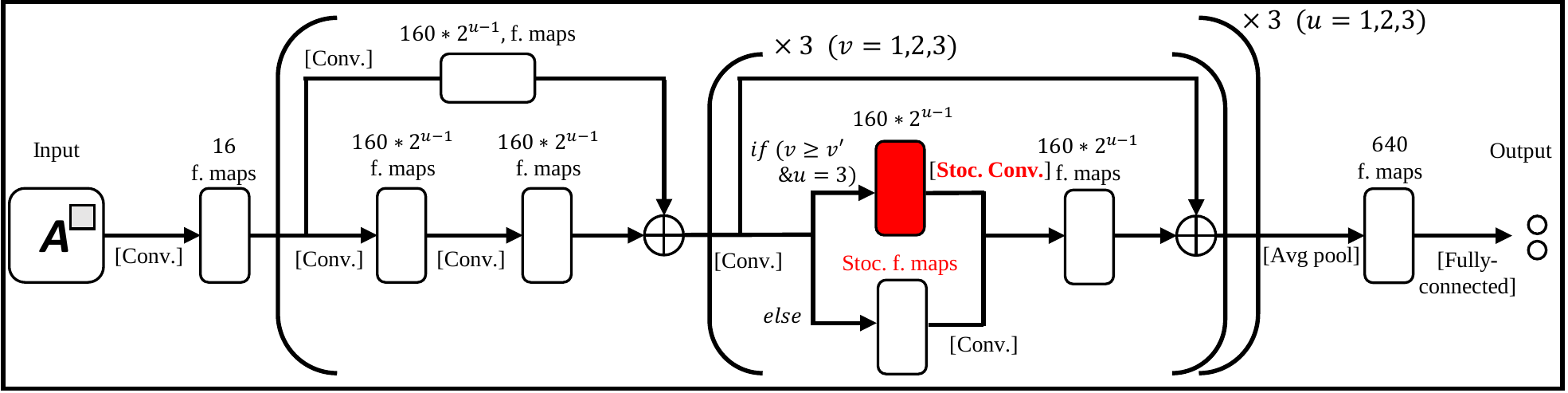, width=1\textwidth}\label{fig:fig3c}}
\caption{The overall structures of (a) Lenet-5, (b) NIN, (c) WRN with 16 layers, and (d) WRN with 28 layers. The red feature maps correspond to the stochastic ones. In case of WRN, we introduce one ($v^{\prime}=3$) and two ($v^{\prime}=2$) stochastic feature maps.}
\end{figure*} 

We present several experimental results for both multi-modal and classification tasks on
MNIST \citep{98MNIST}, Toronto Face Database (TFD) \citep{10TFD}, 
CASIA \citep{liu2011casia},
CIFAR-10, CIFAR-100 \citep{09CIFAR} and SVHN \citep{11SVHN}.
The softmax and Gaussian with the standard deviation of 0.05 are used as the output probability for the classification task and the multi-modal prediction, respectively.
In all experiments, we first train a baseline model, and the trained parameters are used for further fine-tuning those of Simplified-SFNN.

\subsection{Multi-modal regression}

We first verify that it is possible to learn one-to-many mapping via Simplified-SFNN on the TFD, MNIST and CASIA datasets.
The TFD dataset consists of $48\times48$ pixel greyscale images, 
each containing a face image of 900 individuals with 7 different expressions.
Similar to \citep{15SFNN}, we use 124 individuals with at least 10 facial expressions as data. 
We randomly choose 100 individuals with 1403 images for training
and the remaining 24 individuals with 326 images for the test.
We take the mean of face images per individual as the input and
set the output as the different expressions of the same individual.
The MNIST dataset consists of greyscale images, 
each containing a digit 0 to 9 with 60,000 training and 10,000 test images. 
For this experiments, each pixel of every digit images is binarized using its grey-scale value similar to \citep{15SFNN}.
We take the upper half of the MNIST digit as the input
and set the output as the lower half of it.
We remark that both tasks are commonly performed in recent other works to test the multi-modal learning using SFNN \citep{15SFNN,15muprop}.
The CASIA dataset consists of greyscale images,
each containing a handwritten Chinese character.
We use 10 offline isolated characters produced by 345 writers as data.\footnote{We use 5 radical characters (e.g., wei, shi, shui, ji and mu in Mandarin) and 5 related characters (e.g., hui, si, ben, ren and qiu in Mandarin).}
We randomly choose 300 writers with 3000 images for training
and the remaining 45 writers with 450 images for testing.
We take the radical character per writer as the input and 
set the output as either the related character or the radical character itself of the same writer (see Figure \ref{CASIA:gen}).
The bicubic interpolation \citep{keys1981cubic} is used for re-sizing all images as $32\times32$ pixels.

\begin{figure*} [t!] \centering
\subfigure[]
{\epsfig{file=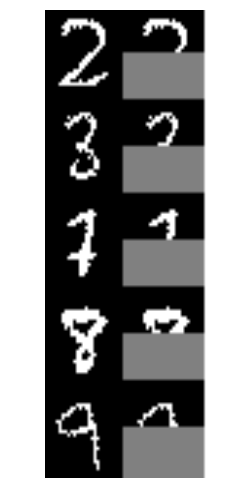,
width=0.1\textwidth}\label{fig:mnist1}}
\,
\subfigure[]
{\epsfig{file=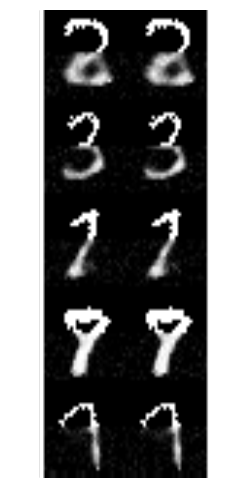,
width=0.1\textwidth}\label{fig:mnist2}}
\,
\subfigure[]
{\epsfig{file=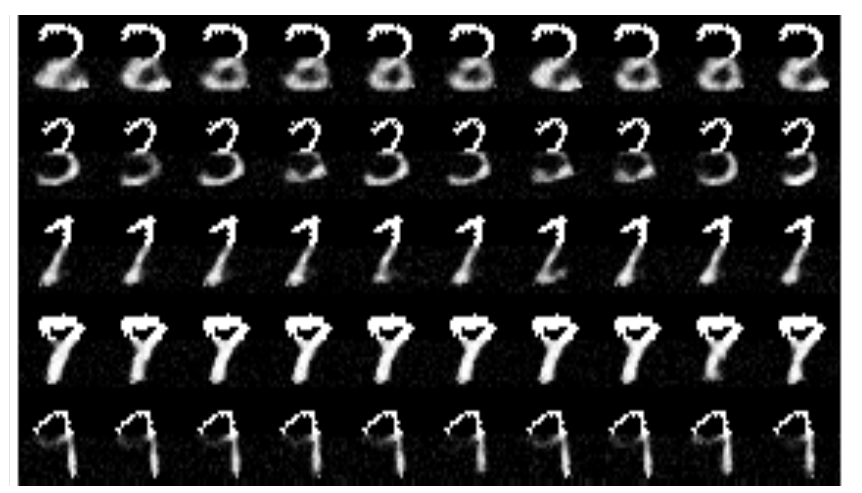,
width=0.35\textwidth}\label{fig:mnist3}}
\,
\subfigure[]
{\epsfig{file=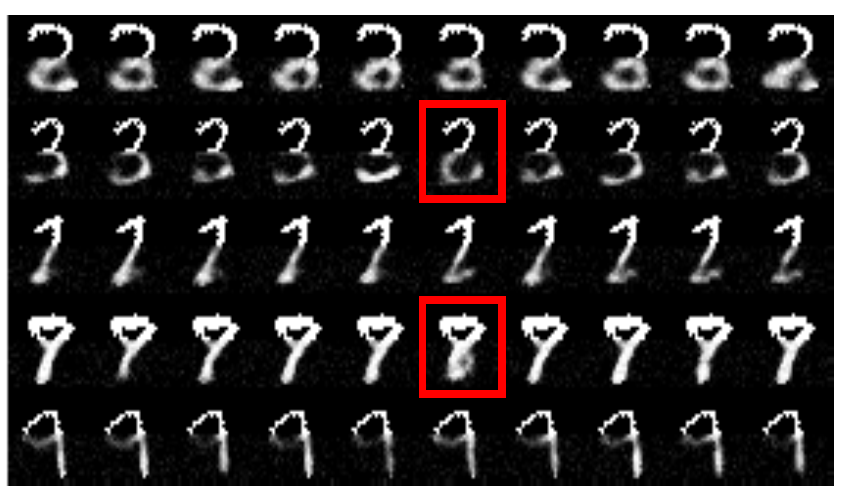,
width=0.35\textwidth}\label{fig:mnist4}}
\caption{Generated samples for predicting the lower half of the MNIST digit given the upper half. (a) The original digits and the corresponding inputs.
The generated samples from (b) sigmoid-DNN, (c) SFNN under the simple transformation, and 
(d) SFNN fine-tuned by Simplified-SFNN.
We remark that 
SFNN fine-tuned by Simplified-SFNN can generate more various samples (e.g., red rectangles) from same inputs better than SFNN under the simple transformation.}
\label{fig:genearion}
\end{figure*}

For both TFD and MNIST datasets, we use fully-connected DNNs as the baseline models similar to other works \citep{15SFNN,15muprop}.
All Simplified-SFNNs are constructed by replacing the first hidden layer of a baseline DNN with stochastic hidden layer. 
The loss was minimized using ADAM learning rule \citep{15ADAM} with a mini-batch size of 128. We used an exponentially decaying learning rate.
We train Simplified-SFNNs with $M=20$ samples at each epoch, and in the test, we use 500 samples.
We use 200 hidden units for each layer of neural networks in two experiments. 
Learning rate is chosen from \{0.005 , 0.002, 0.001, ... , 0.0001\}, and the best result is reported for both tasks.
Table \ref{Tab:mult_exp2} reports the test negative log-likelihood on TFD and MNIST.
One can note that SFNNs fine-tuned by Simplified-SFNN consistently outperform SFNN under the simple transformation.

For the CASIA dataset, we choose
fully-convolutional network (FCN) models \citep{long2015fully} as the baseline ones, 
which consists of convolutional layers followed by a fully-convolutional layer. 
In a similar manner to the case of fully-connected networks,
one can define a stochastic convolution layer, which considers 
the input feature map as a binary random matrix and generates the output feature map as defined in \eqref{def:deter}.
For this experiment, 
we use a baseline model, which consists of convolutional layers followed by a fully convolutional layer. 
The convolutional layers have 64, 128 and 256 filters respectively. 
Each convolutional layer has a4×4receptive field applied with a stride of 2 pixel.
All Simplified-SFNNs are constructed by replacing the first hidden feature maps of baseline models with stochastic ones.
The loss was minimized using ADAM learning rule \citep{15ADAM} with a mini-batch size of 128. We used an exponentially decaying learning rate.
We train Simplified-SFNNs with $M=20$ samples at each epoch, and in the test, we use 100 samples due to the memory limitations.
One can note that
SFNNs fine-tuned by Simplified-SFNN outperform SFNN under the simple transformation as reported in Table \ref{Tab:casia} and Figure \ref{CASIA:gen}.

\begin{table}[t!] 
\centering
\caption{Test NLL on CASIA dataset. All Simplified-SFNNs are constructed by replacing the first hidden feature maps of baseline models with stochastic ones.} 
\label{Tab:casia}
\begin{tabular}{@{}ccclclclll@{}}
\toprule
\multirow{1}{*}{\begin{tabular}[c]{@{}c@{}} Inference Model \end{tabular}} & \multirow{1}{*}{\begin{tabular}[c]{@{}c@{}}      Training Model \end{tabular}} 
& \multicolumn{1}{c}{\begin{tabular}[c]{@{}c@{}} 2 Conv Layers \end{tabular}}& \multicolumn{1}{c}{\begin{tabular}[c]{@{}c@{}} 3 Conv Layers \end{tabular}}\\ \midrule
\multicolumn{1}{c}{FCN}
& \multicolumn{1}{c}{FCN} & \multicolumn{1}{c}{3.89} & \multicolumn{1}{c}{3.47}\\
\multicolumn{1}{c}{SFNN}
 & \multicolumn{1}{c}{FCN} &  \multicolumn{1}{c}{2.61}& \multicolumn{1}{c}{1.97} \\
 \multicolumn{1}{c}{\begin{tabular}[c]{@{}c@{}} Simplified-SFNN \end{tabular}}
& \multicolumn{1}{c}{\begin{tabular}[]{@{}c@{}}fine-tuned by Simplified-SFNN\end{tabular}} &  \multicolumn{1}{c}{3.83} & \multicolumn{1}{c}{3.47}\\
\multicolumn{1}{c}{\begin{tabular}[c]{@{}c@{}} SFNN \end{tabular}}
& \multicolumn{1}{c}{\begin{tabular}[]{@{}c@{}}fine-tuned by Simplified-SFNN\end{tabular}} &  \multicolumn{1}{c}{2.45}& \multicolumn{1}{c}{\bf 1.81} \\ \bottomrule
\end{tabular}
\end{table}

\begin{figure*} [t!]
\CenterFloatBoxes
\begin{floatrow}
\ffigbox{
\subfigure[]
{\epsfig{file=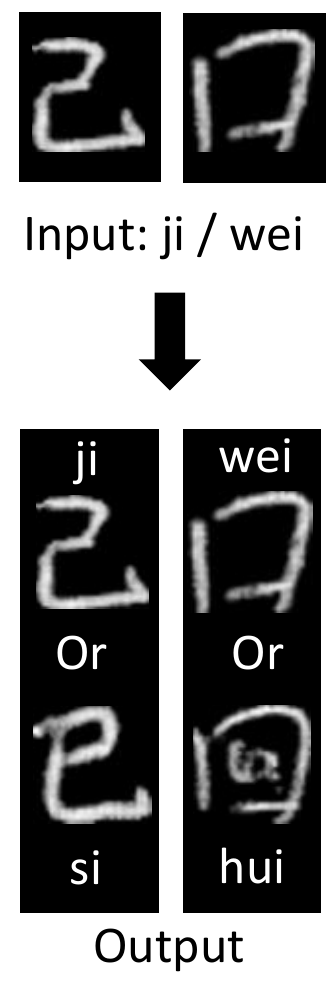,
width=0.1\textwidth}\label{fig:casia1}}
\,
\subfigure[]
{\epsfig{file=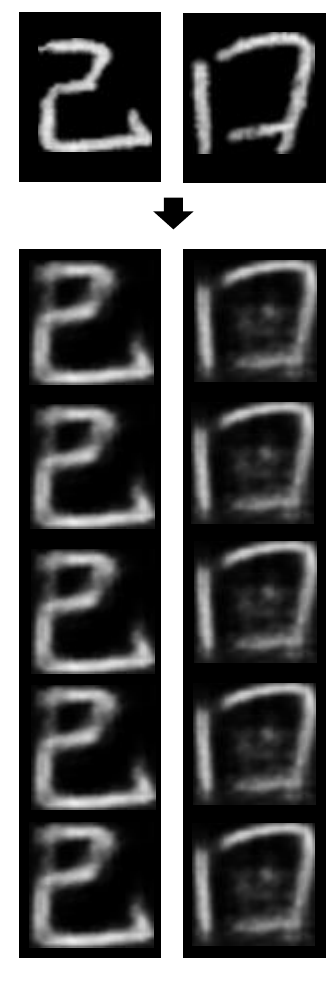,
width=0.1\textwidth}\label{fig:casia2}}
\,
\subfigure[]
{\epsfig{file=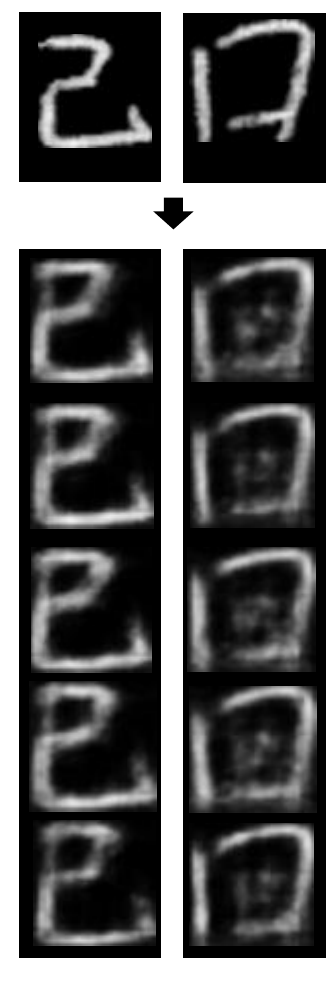,
width=0.1\textwidth}\label{fig:casia3}}
\,
\subfigure[]
{\epsfig{file=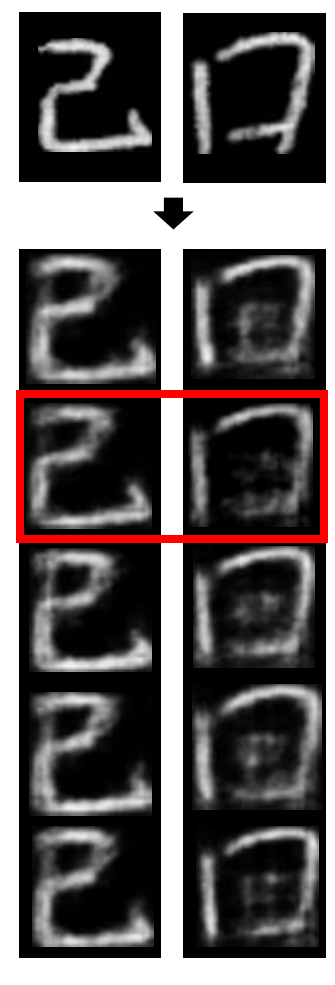,
width=0.1\textwidth}\label{fig:casia4}}}
{\caption{
(a) The input-output Chinese characters.
The generated samples from (b) FCN, (c) SFNN under the simple transformation and (d) SFNN fine-tuned by Simplified-SFNN.
Our model can generate the multi-modal outputs (red rectangle),
while SFNN under the simple transformation cannot.}\label{CASIA:gen}}
\ffigbox{
\vspace{-0.5in}
\includegraphics [width=0.44\textwidth]{./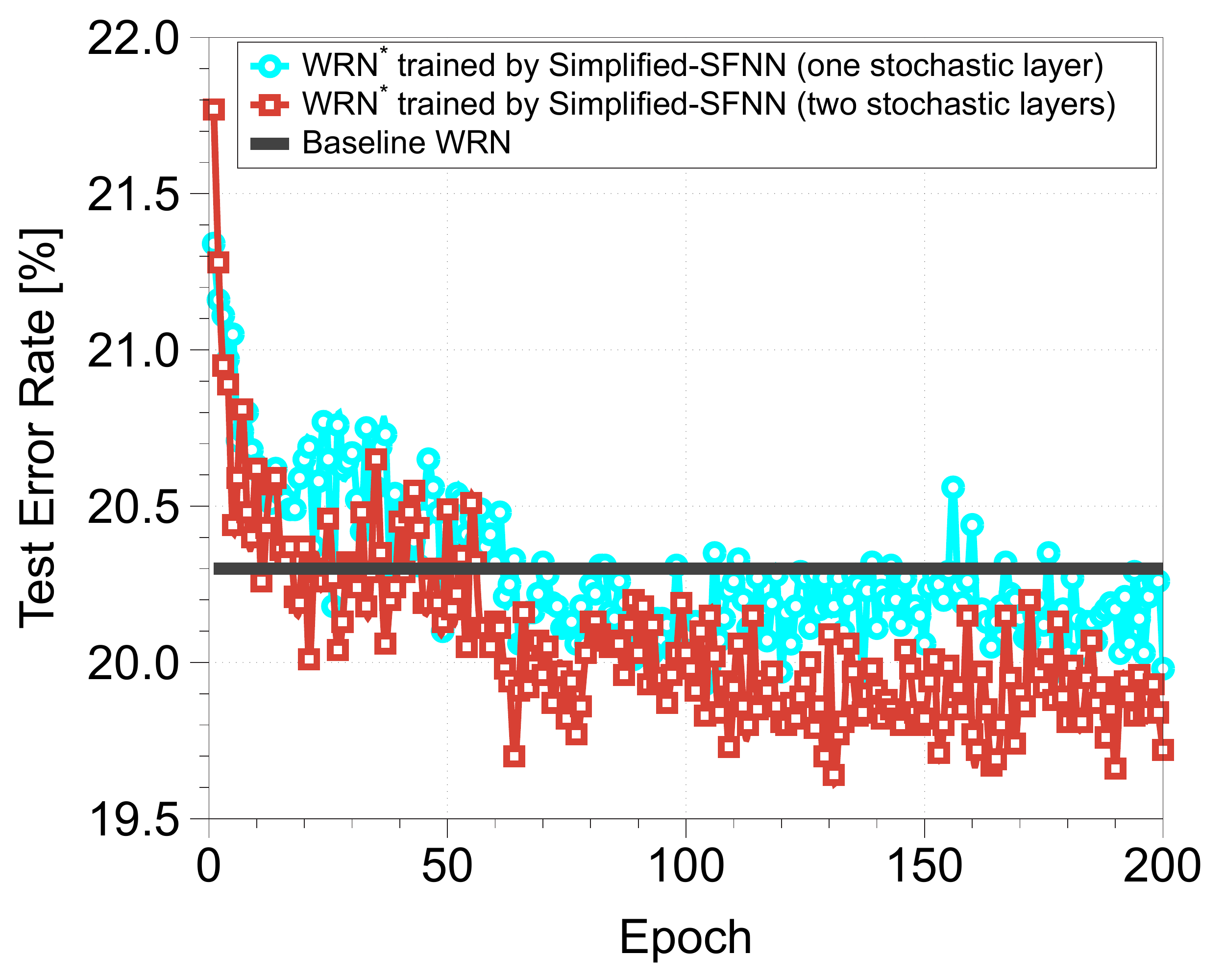}}
{\caption{Test error rates of WRN$^*$ per each training epoch on CIFAR-100. One can note that the performance gain is more significant when we introduce more stochastic layers.}\label{wrn_training}}
\killfloatstyle
\end{floatrow}
\end{figure*}

\subsection{Classification} 

We also evaluate the regularization effect of Simplified-SFNN for the classification tasks on CIFAR-10, CIFAR-100 and SVHN.
The CIFAR-10 and CIFAR-100 datasets consist of 50,000 training and 10,000 test images.
They have 10 and 100 image classes, respectively. 
The SVHN dataset consists of 73,257 training
and 26,032 test images.\footnote{We do not use the extra SVHN dataset for training.}
It consists of a house number 0 to 9 collected by Google Street View. 
Similar to \citep{16wideresnet}, we pre-process the data using global contrast normalization and ZCA whitening. 
For these datasets, 
we design a convolutional version of Simplified-SFNN using convolutional neural networks such as Lenet-5 \citep{98MNIST}, NIN \citep{14NIN} and WRN \citep{16wideresnet}.
All Simplified-SFNNs are constructed by replacing a hidden feature map of a baseline models, i.e., Lenet-5, NIN and WRN, with stochastic one as shown in Figure \ref{fig:fig3c}.
We use WRN with 16 and 28 layers for SVHN and CIFAR datasets, respectively,
since they showed state-of-the-art performance as reported by \citep{16wideresnet}.
In case of WRN,
we introduce up to two stochastic convolution layers.
Similar to \citep{16wideresnet}, the loss was minimized using the stochastic gradient descent method with Nesterov momentum.
The minibatch size is set to 128, and weight decay is set to 0.0005.
For 100 epochs, we first train baseline models, i.e., Lenet-5, NIN and WRN,
and trained parameters are used for initializing those of Simplified-SFNNs.
All Simplified-SFNNs are trained with $M=5$ samples and the test error is only measured by the approximation \eqref{eq:approximation2}.
The test errors of baseline models are measured after training them for 200 epochs similar to \citep{16wideresnet}.
All models are trained 
using dropout 
\citep{12dropout} 
and batch normalization
\citep{15batch}.
Table \ref{Tab:CIFAR2} reports the classification error rates on CIFAR-10, CIFAR-100 and SVHN.
Due to the regularization effects, Simplified-SFNNs consistently outperform their baseline DNNs.
In particular, WRN$^*$ 
of 28 layers and 36 million parameters 
outperforms WRN by 0.08$\%$ on CIFAR-10 and 0.58$\%$ on CIFAR-100.
Figure \ref{wrn_training} shows that the error rate is decreased more
when we introduce more stochastic layers, 
but it increases the fine-tuning time-complexity of Simplified-SFNN.

\begin{table*}[t!] 
\centering
\caption{Test error rates [$\%$] on CIFAR-10, CIFAR-100 and SVHN. The error rates for WRN are from our experiments, where original ones reported in \citep{16wideresnet} are in the brackets. Results with $\dagger$ are obtained using the horizontal flipping and random cropping augmentation.} 
\label{Tab:CIFAR2}
\resizebox{\textwidth}{!}{
\begin{tabular}{@{}llllll@{}}
\toprule
\begin{tabular}{c}Inference \\ model \end{tabular} & Training Model  & \multicolumn{1}{c}{CIFAR-10} & \multicolumn{1}{c}{CIFAR-100}  & \multicolumn{1}{c}{SVHN}    \\ \midrule
\begin{tabular}{c}Lenet-5 \end{tabular} & \begin{tabular}{c}Lenet-5\end{tabular}  &\multicolumn{1}{c}{37.67}  &\multicolumn{1}{c}{77.26} &\multicolumn{1}{c}{11.18}\\
\begin{tabular}{c}Lenet-5$^*$ \end{tabular}& \begin{tabular}{c}fine-tuned by Simplified-SFNN \end{tabular}  & \multicolumn{1}{c}{33.58}  & \multicolumn{1}{c}{73.02} & \multicolumn{1}{c}{9.88} \\ \midrule
\begin{tabular}{c}NIN \end{tabular} &\begin{tabular}{c}NIN\end{tabular}  & \multicolumn{1}{c}{9.51} & \multicolumn{1}{c}{32.66} & \multicolumn{1}{c}{3.21} \\
\begin{tabular}{c}NIN$^*$ \end{tabular} &\begin{tabular}{c}fine-tuned by Simplified-SFNN \end{tabular}   & \multicolumn{1}{c}{9.33} & \multicolumn{1}{c}{30.81} & \multicolumn{1}{c}{3.01} \\ \midrule
\begin{tabular}{c}WRN \end{tabular}& \begin{tabular}{c}WRN \end{tabular}  &\multicolumn{1}{c}{4.22 (4.39)$\dagger$}  &\multicolumn{1}{c}{20.30 (20.04)$\dagger$} &\multicolumn{1}{c}{3.25$\dagger$} \\ 
\begin{tabular}{c}WRN$^*$ \end{tabular} & 
\begin{tabular}{l}
fine-tuned by Simplified-SFNN   (one stochastic layer)  \end{tabular} & \multicolumn{1}{c}{4.21$\dagger$}  & \multicolumn{1}{c}{19.98$\dagger$}&\multicolumn{1}{c}{3.09$\dagger$} \\ 
\begin{tabular}{c}WRN$^*$ \end{tabular} & \begin{tabular}{l}
fine-tuned by Simplified-SFNN   (two stochastic layers)  \end{tabular}   & \multicolumn{1}{c}{{\bf 4.14}$\dagger$}  & \multicolumn{1}{c}{{\bf 19.72}$\dagger$}&\multicolumn{1}{c}{3.06$\dagger$} \\  \bottomrule
\end{tabular}}
\end{table*}

\section{Conclusion} \label{sec:conc}

In order to develop an efficient training method for large-scale SFNN, 
this paper proposes a new intermediate stochastic model, called Simplified-SFNN.
We establish the connection between three models, i.e., DNN $\rightarrow$ Simplified-SFNN $\rightarrow$ SFNN, which naturally leads to an efficient training procedure of the stochastic models utilizing pre-trained parameters of DNN.
This connection naturally leads an efficient training procedure of the stochastic models utilizing pre-trained parameters and architectures of DNN.
Using several popular DNNs including
Lenet-5, NIN, FCN and WRN, we show how they can be effectively transferred to the corresponding stochastic models for both multi-modal and non-multi-modal tasks.
We believe that our work brings a new angle for training stochastic neural networks, which
would be of broader interest in many related applications.

{
\begingroup
\subsubsection*{References}
\renewcommand{\section}[2]{}

\endgroup}

\end{document}